\DeclareMathOperator*{\argmin}{\arg\,\min}
\newcommand{\mcF}{\mathcal{F}}
\newcommand{\mcL}{\mathcal{L}}
\newcommand{\given}{\mid}
\newcommand{\E}{\mathbb{E}}
\newcommand{\Yh}{\hat{Y}}
\newcommand{\abs}[1]{\lvert#1\rvert}
\newtheorem{theorem}{Theorem}
\newtheorem{lemma}[theorem]{Lemma}
\newtheorem{cor}[theorem]{Corollary}
\date{}
\begin{document}

\title{Distribution-Preserving k-Anonymity}
\author{Dennis~Wei, Karthikeyan~Natesan~Ramamurthy, and~Kush~R.~Varshney\\
IBM Research AI\\
Thomas J.\ Watson Research Center, Yorktown Heights, NY}
\thanks{Portions of this work were first presented at the 2015 SIAM International Conference on Data Mining.}

\maketitle

\begin{abstract}
Preserving the privacy of individuals by protecting their sensitive attributes is an important consideration during microdata release. However, it is equally important to preserve the quality or utility of the data for at least some targeted workloads. We propose a novel framework for privacy preservation based on the k-anonymity model that 
is ideally suited for workloads that require preserving 
the probability distribution of the quasi-identifier variables in the data. Our framework 
combines the principles of distribution-preserving quantization and k-member clustering, and we specialize it to 
two variants that respectively use intra-cluster and Gaussian dithering of cluster centers to achieve distribution preservation. We perform theoretical analysis of the proposed schemes in terms of distribution preservation, and describe their utility in workloads such as covariate shift and transfer learning where such a property 
is necessary. Using extensive experiments on real-world Medical Expenditure Panel Survey data, we demonstrate the merits of our algorithms over standard k-anonymization for a hallmark health care application where an insurance company wishes to understand the risk in entering a new market. Furthermore, by empirically quantifying the reidentification risk, we also show that the proposed approaches indeed maintain k-anonymity.
\end{abstract}

\begin{keywords}
privacy, microdata release, distribution preservation, k-member clustering, reidentifcation risk, supervised learning, health care, transfer learning, covariate shift, dithering, Rosenblatt's transformation
\end{keywords}


\section{Introduction}
\label{sec:introduction}

Data owners often employ data analysts to provide them with accurate and actionable insights.  In many important domains, the data to be analyzed consists of individual records with sensitive fields that must be protected to maintain privacy.  The passing of data from owner to analyst, known as microdata release or publishing, necessitates anonymization or some other similar privacy protection operation in these applications.  A commonly used privacy criterion in microdata publishing is $k$-anonymity \cite{Sweeney2002,Samarati2001}.  Any operation on a data set whose result achieves $k$-anonymity is equally good from the privacy perspective; it is the \emph{workload} for which the data is to be used that defines the quality or utility of the operation \cite{Samarati2001,Iyengar2002,YiWJ2014}.  

A key tool for the data analyst in developing insights is supervised learning.  In certain challenging settings, labeled training data is available to the supervised learning algorithm for one problem or set of conditions, but testing is to be performed on a different problem or set of conditions.  One common way of addressing this challenge is by \emph{transfer learning} and \emph{covariate shift} methods that reweight training samples in accordance with the probability distributions of the different problems \cite{SugiyamaKM2007,BickelSS2009,Quionero-CandelaSSL2009}.  In this paper, our contribution is to develop the first (to the best of our knowledge) $k$-anonymity approach for workloads such as transfer learning and covariate shift that require the preservation of the probability distributions of the data \cite{WeiRV2015}.

Consider a number of colleges with different student distributions that want to release their data to enable the accurate learning of predictive models for a common learning outcome \cite{BalkanBFM2012}.  Using the data from all of the colleges together allows for better learning than separately due to the advantages of transfer learning, but releasing such information is fraught with privacy issues such as satisfying the Family Educational Rights and Privacy Act (FERPA) in the United States \cite{DariesRWYWHSC2014}.  There are similar laws in applications besides education, including the  Health Insurance Portability and Accountability Act (HIPAA) in health care and the Gramm-Leach-Bliley Act (GLBA) in personal finance.  Several statistical interpretations of the legal language for protecting privacy exist; the property of $k$-anonymity is a common interpretation to lower the risk of reidentification \cite{MalinBM2011,ElemamD2008}.  

As another example, consider the change in the landscape of health insurance in the United States after the passage of the Patient Protection and Affordable Care Act.  Health insurance companies entered new markets, defined by geography, by age group, and by other prospect base criteria.  When the legislation was being enacted, the companies had to decide which new markets to enter using the information at their disposal at the time.  In making the decisions, companies sought to enter markets containing an abundance of profitable, i.e.\ low-cost, individuals likely to enroll in their plans.  This objective gives rise to the problem of market risk assessment, the estimation of cost profiles of new market populations.  Successful market risk assessment, however, faces at least two challenges.  On the privacy front, even internal use of individual health data by an insurance company for its planning and strategy is protected by HIPAA.  On the supervised learning front, the companies had access to member health cost data from their existing markets available for training, but no such cost data available for the new markets, necessitating predictive models with covariate shift.

In privacy, there are three types of variables: key attributes, quasi-identifiers, and sensitive attributes.  Key attributes, such as the name of the person, are always dropped before microdata release.  Quasi-identifiers are often demographic variables like age, gender, and postal code that can be matched to other publicly-available data sets such as voter registration records containing both key attributes and quasi-identifiers to reveal identities in the data set to be protected.  It is this type of identity disclosure that $k$-anonymity aims to protect against.  Under the $k$-anonymity privacy model, the quasi-identifiers for an individual cannot be distinguished from the quasi-identifiers of at least $k-1$ other individuals \cite{Sweeney2002,Samarati2001}.  The sensitive attributes are variables such as educational or medical test results.  

There are many anonymization algorithms to transform the quasi-identifier attributes of a data set to achieve $k$-anonymity, but these existing algorithms do not seek to preserve the data distribution, as is required for covariate shift and transfer learning.  Existing algorithms for $k$-anonymity include generalizations and suppressions \cite{Iyengar2002}, multidimensional generalization \cite{LeFevreDR2006}, and multidimensional clustering in which the samples or records in the data are grouped by similarity such that the smallest group has at least $k$ elements \cite{ByunKBL2007}.  Clustering approaches offer the most flexibility and best performance.

The reason that existing clustering approaches to anonymization are not distribution-preserving is as follows: the optimization criterion is based on the average distortion of the individual samples and with such a criterion, the optimal cluster centers do not follow the same distribution as the original data.  In the asymptotic limit as the number of clusters goes to infinity, the distribution of the cluster center locations  is the original data distribution to the one third power, properly normalized \cite{GrayN1998}.

Distribution-preserving quantization is an alternative to standard clustering methods that does have 
the desired output behavior \cite{LiKK2010,AlamgirLL2014}.  It has been developed in the context of audio signal processing and has never been considered in the privacy preservation context before.  Specifically, the approach of \cite{LiKK2010} is based on subtractive dithered quantization \cite{LipshitzWV1992} followed by Rosenblatt's transformation \cite{Rosenblatt1952}.  We emphasize that although dithering, the introduction of noise or random perturbations, is fraught with several issues when used for privacy preservation \cite{KarguptaDWS2003}, in our work, the introduction of noise is not for anonymization purposes, but to allow the manipulation of the distribution.  Since distribution-preserving quantization comes from the signal processing and communications domain, not the privacy domain, it does not aim to achieve $k$-anonymity.  In particular, like $k$-means clustering and other standard clustering methods, it takes the number of clusters as an input parameter rather than the minimum number of samples in each cluster, which is what is needed for $k$-anonymity.  

Clustering for $k$-anonymity requires a different problem setup, given the name $k$-member clustering in \cite{ByunKBL2007}.  The $k$-member clustering problem has the $k$ of $k$-anonymity as the input parameter rather than the number of clusters.  Algorithms for $k$-member clustering may be divided into two classes: objective-driven optimization algorithms and simple scalable algorithms.  The first class includes algorithms based on minimum-cost network flow \cite{DemirizBB2009} and a cluster penalty function \cite{Rebollo-MonederoFPP2013}.  The second class includes greedy clustering \cite{ByunKBL2007}, subsampling and local optimization \cite{BanerjeeG2006}, and constrained agglomerative clustering \cite{GeEJD2007}.  This problem is also related to clustering with maximum cluster size constraints \cite{GeethaPV2009,GanganathCT2014}, semi-supervised clustering \cite{BasuDW2009}, and maximum output entropy quantization \cite{Messerschmitt1971}.

\emph{Contribution.}  To the best of our knowledge, there is no existing approach that transforms quasi-identifier data to achieve $k$-anonymity while also preserving its probability distribution.  Our solution combines the key aspects of $k$-member clustering algorithms with the key aspects of distribution-preserving quantization to obtain a distribution-preserving $k$-anonymity transformation.  We propose two ways to dither cluster centers: intra-cluster dither and Gaussian dither.  For each, we theoretically analyze the distribution-preserving properties.  As the distribution preservation is motivated by workloads such as covariate shift and transfer learning that require the data distribution, we detail how our proposed method can be used in conjunction with those machine learning tasks.  We demonstrate the proposed approach on real-world Medical Expenditure Panel Survey (MEPS) data for the health insurance market risk assessment application mentioned above.  We find that the proposed method does in fact maintain $k$-anonymity empirically by investigating reidentification risk and find that the distribution preservation is critical to obtaining usable machine learning predictions.

The remainder of the paper is organized as follows.  In Section~\ref{sec:background}, we provide background on the transfer learning and covariate shift problems encountered in supervised learning.  In Section~\ref{sec:privacy}, we develop the new privacy-preservation method for the workloads of interest that combines aspects of $k$-member clustering and distribution-preserving quantization.  In Section~\ref{sec:empirical}, we provide empirical results on real-world health care data for market risk assessment.  Section~\ref{sec:conclusion} provides a summary and discussion.

\section{Covariate Shift and Transfer Learning}
\label{sec:background}

In this section, we first introduce notation, then describe the basic learning problem encountered in the covariate shift setting, and finally describe the problem in the transfer learning setting.  Random variables are indicated by capital letters and their samples by lowercase letters.  $\E$ is used to denote the expectation of a random variable.

Consider the following general supervised learning problem: we wish to predict a response variable $Y$ using predictor variables $X$.  Given a class of functions $\mcF$ and training samples $(x_i, y_i)$, $i=1,\dots,n$, a predictor function is selected from $\mcF$ to minimize the empirical risk, 
\begin{equation}\label{eqn:riskEmp}
\Yh(\cdot) = \argmin_{f\in\mcF} \; \frac{1}{n} \sum_{i=1}^{n} \mcL(f(x_i), y_i), 
\end{equation}
for some choice of loss function $\mcL$ that measures the error between the predicted response $f(x_i)$ and actual response $y_i$.  

Assume that the training samples are drawn i.i.d.\ from the joint distribution $p_{X,Y} = p_X p_{Y\given X}$. The problem of \emph{covariate shift} occurs when the predictor variables or covariates are drawn from a different distribution $q_X$ in the test phase.  The conditional distribution $p_{Y\given X}$ is assumed to remain the same.  In the most idealized setting, covariate shift does not necessarily pose a problem:  As the number of samples $n \to \infty$, the empirical risk in \eqref{eqn:riskEmp} converges to the population risk 
\[
\E \left[ \mcL(f(X), Y) \right] = \E \left[ \E \left[ \mcL(f(X), Y) \mid X \right] \right],
\]
from which it can be seen that the optimal choice of predictor $f$ depends only on the conditional distribution $p_{Y\given X}$, regardless of the marginal distribution for $X$ (e.g.~$p_X$ or $q_X$).  Hence as $n \to \infty$, the conditional distribution $p_{Y\given X}$ can be learned very accurately and the optimal predictor can be obtained provided that the class $\mcF$ is rich enough to contain it.  However, in practical settings where $n$ is finite or $\mcF$ is overly constrained, then the predictor $\Yh$ resulting from \eqref{eqn:riskEmp} generally depends on the training distribution $p_X$ and thus can be mismatched to the test distribution $q_X$ under which performance is evaluated. 

A straightforward solution to covariate shift is to weight the training samples by the ratio 
\[
w(x) = \frac{q_X(x)}{p_X(x)}.
\]
This weighting represents the relative importance of each sample under $q_X$ rather than $p_X$. The weighted empirical risk 
\[
\frac{1}{n} \sum_{i=1}^{n} w(x_i) \mcL(f(x_i), y_i)
\]
then converges to
\begin{equation}\label{eqn:riskPopW}
\E_{p_X p_{Y\given X}} \left[ w(X) \mcL(f(X), Y) \right]
= \E_{q_X p_{Y\given X}} \left[ \mcL(f(X), Y) \right],
\end{equation}
thus matching the test distribution. 

In the transfer learning problem, we have $m$ different tasks each with their own joint distributions in training and testing: $p_{X,Y\given T}$ and $q_{X,Y\given T}$, where $T \in \{1,\ldots,m\}$ is a variable indicating the task.  The tasks may be related to each other and we would like to use the training samples from all other tasks in helping learn $f_t$, the predictor for task $t$.  Like covariate shift, the transfer learning problem may be addressed by distribution matching through the weight
\begin{displaymath}
w(x,y \given T = t) = \frac{p_{X,Y\given T}(x,y\given T = t)}{\sum_{t'=1}^m p_T(T=t')p_{X,Y\given T}(x,y\given T = t')} \frac{q_{X\given T}(x\given T = t)}{p_{X\given T}(x \given T = t)}.
\end{displaymath}
Note that the second part of the weight is the same as the covariate shift weight to take differences between training and testing into account.  The first part of the weight allows for the transfer of information across tasks.  With this weight, we have convergence to \cite{BickelSS2009}:
\begin{equation}\label{eqn:riskPopWtransfer}
\E_{\sum_{t'=1}^{m}p_{t'}p_{X\given t'} p_{Y\given X,t'}} \left[ w(X,Y\given t) \mcL(f_t(X), Y) \right] = \E_{q_{X\given t} p_{Y\given X,t}} \left[ \mcL(f_t(X), Y) \right],
\end{equation}
thereby matching the test distribution of task $t$.

We must also estimate the weights $w(x)$ and $w(x,y\given t)$ that we have defined in the learning process.  There are a variety of ways to estimate them including nonparametric methods and methods based on logistic regression, cf.\ \cite{WeiRV2015,BickelSS2009}.  We do not go into the details here, but only emphasize that the data distributions are the key ingredient when estimating the weights.  Therefore, any privacy preservation operation performed on the data should ideally preserve the data distributions.

\section{Distribution-Preserving $k$-Anonymity}
\label{sec:privacy}


As discussed in the introduction, the privacy of individuals must be protected when working with their personal 
data in domains such as education and health care.  In particular, taking $k$-anonymity as the notion of privacy, the quasi-identifiers $x$ in 
the original data 
must be converted to some other values $\hat{x}$ 
in a way that the data for an individual cannot be distinguished from at least $k-1$ others.  (For notational simplicity, we use $x, y$ generically in this section to refer to training data distributed as $p_{X,Y}$ in the standard supervised learning or covariate shift settings, or to each training data set distributed as $p_{X,Y\given T=t}$, $t = 1,\dots,m$ in the transfer learning setting.)  
Moreover, given 
our ultimate goal of predicting response $Y$, 
we not only want the samples $\hat{x}_i$, 
$i=1,\ldots,n$ to have the $k$-anonymity property, but also 
the 
model learned from $(\hat{x}_i,y_i)$, $i=1,\ldots,n$ to have small prediction error, as quantified by relative bias, 
$R^2$, or 
other measures of generalization.  

With these dual goals in mind, in \cite{WeiRV2015} we introduced 
a sequence of operations inspired by $k$-member clustering \cite{ByunKBL2007} and distribution-preserving quantization with dithering and transformation \cite{LiKK2010}.
In the present paper, we generalize and further develop this framework.  Figure~\ref{fig:blockdiagram} illustrates the overall procedure.
\begin{figure}
\centering
\includegraphics[width=\columnwidth]{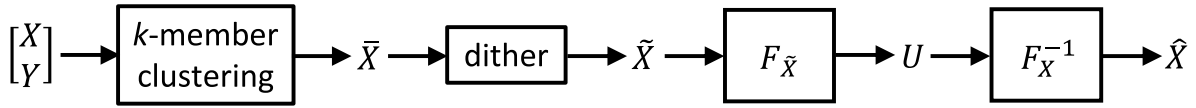} 
\caption{Block diagram of the operations to achieve $k$-anonymity and distribution preservation.}
\label{fig:blockdiagram}
\end{figure}
Section~\ref{sec:privacy:clustering} discusses the first step of $k$-member clustering while Section~\ref{sec:privacy:trans} discusses the subsequent distribution-recovering transformation.

\subsection{$k$-Member Clustering}
\label{sec:privacy:clustering}

The original data $(x_i, y_i)$, $i=1,\dots,n$ is first clustered subject to the $k$-anonymity requirement that each cluster contain at least $k$ members.  Define $c$ to be the number of clusters and let $\ell = 1,\dots,c$ index the clusters.  Let $a_{i\ell} = 1$ if sample $i$ is assigned to cluster $\ell$ and $a_{i\ell} = 0$ otherwise.  Denote by $(\bar{x}_\ell, \bar{y}_\ell)$ the centroid chosen to represent the samples in cluster $\ell$.  The centroids are determined in part by a function $d\left( (x_i, y_i), (\bar{x}_\ell, \bar{y}_\ell)\right)$ that measures the distortion between two points.  
Here we choose $d$ 
to be a weighted squared Euclidean distance,
\begin{equation}\label{eq:distortionEuclidean}
d\left( (x_i, y_i), \bigl(\bar{x}_\ell, \bar{y}_\ell\bigr)\right) = \bigl\lVert x_i - \bar{x}_\ell \bigr\rVert_2^2 + w \bigl(y_i - \bar{y}_\ell\bigr)^2,
\end{equation}
where $w > 0$ is the weight on the response component relative to the quasi-identifier components, intended to account for any scale difference between them.  

With the foregoing definitions, $k$-member clustering can be formulated as the following optimization problem:
\begin{equation}\label{eq:kMemberClustering}
\begin{split}
\min_{\{a_{i\ell}\}, \{(\bar{x}_\ell, \bar{y}_\ell)\}} &\sum_{i=1}^{n} \sum_{\ell=1}^{c} a_{i\ell} d\left( (x_i, y_i), \bigl(\bar{x}_\ell, \bar{y}_\ell\bigr)\right)\\
\text{s.t.} \quad &a_{i\ell} \in \{0,1\} \quad \forall \; i,\ell,\\
&\sum_{\ell=1}^{c} a_{i\ell} = 1 \quad \forall \; i,\\
&\sum_{i=1}^{n} a_{i\ell} \geq k \quad \forall \; \ell.
\end{split}
\end{equation}
The last line in \eqref{eq:kMemberClustering} expresses the $k$-member constraint while the second-last line ensures that each sample is assigned to exactly one cluster.  These constraints imply that the number of clusters $c$ is at most $\lfloor\frac{n}{k}\rfloor$.  In this work, we simply fix $c = \lfloor\frac{n}{k}\rfloor$; optimization over the number of clusters is a subject for future study. 

Solutions to the $k$-member clustering problem \eqref{eq:kMemberClustering} can be obtained using methods such as \cite{DemirizBB2009,Rebollo-MonederoFPP2013,ByunKBL2007,BanerjeeG2006,GeEJD2007} discussed in Section~\ref{sec:introduction}.  In the experiments of Section~\ref{sec:empirical}, due to its scalability, we use the greedy algorithm proposed in \cite{ByunKBL2007} modified to use the Euclidean distortion function \eqref{eq:distortionEuclidean}.  

Once a solution to \eqref{eq:kMemberClustering} has been determined, the original response values $y_i$ are set aside, to be rejoined only at the end with the transformed quasi-identifiers $\hat{x}_i$. 
In place of each original $x_i$, we record only the cluster $\ell_i$ to which it was assigned, i.e.\ $\ell_i = \ell$ for which $a_{i\ell} = 1$.  For each cluster $\ell$, the associated set of values $\mathcal{V}_\ell = \{x_i, \; i : a_{i\ell}=1 \}$ is retained; 
the centroid $\bar{x}_\ell$ is a function of $\mathcal{V}_\ell$ 
and may or may not be retained depending on the type of dither to be applied.  Importantly however, an index $i$ is no longer linked to any single value in $\mathcal{V}_{\ell_i}$.  
In this way the clustering step achieves our goal of $k$-anonymity; since all 
operations that follow depend only on the whole sets $\mathcal{V}_{\ell_i}$, 
a level of privacy equivalent to $k$-anonymity is maintained. 
In Section~\ref{sec:empirical:reID}, we confirm through reidentification experiments that the proposed distribution-recovering operations not only preserve $k$-anonymity but can also result in even stronger privacy.

\subsection{Distribution-Recovering Transformation}
\label{sec:privacy:trans}

The operations following $k$-member clustering in Figure~\ref{fig:blockdiagram} aim to reconstruct a $k$-anonymous data set that follows the distribution of $X$.  Using the clustering output described at the end of Section~\ref{sec:privacy:clustering}, the first 
operation of dithering (the intentional application of noise) produces $\tilde{x}_i$, $i=1,\dots,n$, sampled from a continuous probability distribution.  (By contrast, simply using the centroids $\bar{x}_\ell$ would result in a discrete distribution.) 
Different types of dither give rise to different variants of the proposed procedure.  This makes our framework quite general since any dither with a continuous distribution may be used.  In Sections~\ref{sec:privacy:intra} and \ref{sec:privacy:gaussian}, we discuss two versions corresponding to intra-cluster and Gaussian dither.

The final two operations in Figure~\ref{fig:blockdiagram} constitute Rosenblatt's transformation \cite{Rosenblatt1952} for transforming an arbitrary continuous distribution into a specified target distribution, in our case that of $X$.  The first of these operations, transformation to a uniform distribution, is given by the following sequence of (conditional) cumulative distribution functions (CDF) of $\tilde{X}$, one for each dimension:
\begin{equation}\label{eq:RosenblattForward}
\begin{split}
	u_{i1} &= F_{\tilde{X}_1}(\tilde{x}_{i1}) = F_{\tilde{X}_1\given\tilde{X}^0}\bigl(\tilde{x}_{i1} \given \tilde{x}_i^0 \bigr)\\
	u_{i2} &= F_{\tilde{X}_2\given\tilde{X}_1}\bigl(\tilde{x}_{i2}\given \tilde{x}_{i1}\bigr) = F_{\tilde{X}_2\given\tilde{X}^1}\bigl(\tilde{x}_{i2}\given \tilde{x}_i^{1}\bigr)\\
	&\vdots\\
u_{id} &= F_{\tilde{X}_d\given\tilde{X}^{d-1}}\bigl(\tilde{x}_{id} \given \tilde{x}_i^{d-1}\bigr),
\end{split}
\end{equation}
where $\tilde{x}_i^{j-1}$ is shorthand for $\tilde{x}_{i1}, \dots, \tilde{x}_{i,j-1}$ (similarly for other multi-dimensional quantities) and a superscript of $0$ is understood to indicate the empty list.  The detailed forms of the CDFs in \eqref{eq:RosenblattForward} depend on the choice of dither and are specified in Sections~\ref{sec:privacy:intra} and \ref{sec:privacy:gaussian}.  Transformation \eqref{eq:RosenblattForward} is applied to all samples $i = 1,\dots,n$ and can be done in parallel, for example cluster by cluster.

The last operation, transformation from a uniform distribution to the distribution of $X$, is similar to \eqref{eq:RosenblattForward} but with the inverse CDF of $X$:
\begin{equation}\label{eq:RosenblattInverse}
\begin{split}
	\hat{x}_{i1} &= F_{X_1}^{-1}(u_{i1}) = F_{X_1\given X^0}^{-1}\bigl(u_{i1} \given \hat{x}_i^0\bigr) \\
	\hat{x}_{i2} &= F_{X_2\given X_1}^{-1}\bigl(u_{i2}\given \hat{x}_{i1}\bigr) = F_{X_2\given X^1}^{-1}\bigl(u_{i2}\given \hat{x}_i^1\bigr)\\
	&\vdots\\
	\hat{x}_{id} &= F_{X_d\given X^{d-1}}^{-1}(u_{id}\given \hat{x}_i^{d-1}),\end{split}
\end{equation}
where the generalized inverse CDF is defined as 
\begin{equation}\label{eq:invCDF}
F_X^{-1}(u) = \inf\{x : F_X(x) \geq u\}.
\end{equation}
In practice, the underlying distribution that generates the data $x$ is not known.  Instead we use the empirical distribution, in which case $X$ can always be regarded as discrete.  For each dimension $j = 1, \dots, d$, define $v_j(1) < v_j(2) < \dots < v_j(n_j)$, $n_j \leq n$, to be the distinct observed values of $X_j$ in increasing order.  Then by specializing \eqref{eq:invCDF} to discrete distributions, transformation \eqref{eq:RosenblattInverse} can be expressed more explicitly as 
\begin{align}
&\hat{x}_{ij} = v_j(i_j) \quad \text{if}\nonumber\\
& F_{X_j\given X^{j-1}}\bigl(v_j(i_j-1) \given \hat{x}_i^{j-1}\bigr) 
< u_{ij} 
\leq F_{X_j\given X^{j-1}}\bigl(v_j(i_j) \given \hat{x}_i^{j-1}\bigr), \nonumber\\ 
&\qquad\qquad\qquad\qquad i_j = 1,\dots,n_j, \quad j = 1,\dots,d,\label{eq:RosenblattInverse2}
\end{align}
where we define $v_j(0) = -\infty$ for $i_j = 1$ so that $F_{X_j\given X^{j-1}}(v_j(0) \given \hat{x}_i^{j-1}) = 0$.

At the end of the process, the sensitive $y_i$ values are rejoined with the clustered and transformed quasi-identifiers $\hat{x}_i$. 
Overall, this sequence of steps yields output samples $(\hat{x}_i,y_i)$, $i=1,\dots,n$ that are close 
to the original samples $(x_i,y_i)$ in distribution while being $k$-anonymous.  The main free parameter, $k$, can be varied to achieve the desired tradeoff between privacy and 
prediction error.

\subsubsection{Intra-cluster dither}
\label{sec:privacy:intra}

In this subsection, we consider dither distributions that are supported only on the cluster to which a sample belongs.  In other words, for each $i$, the cluster assignment $\ell_i$ is viewed as a conditioning event and $\tilde{x}_i$ is sampled randomly from the support of $\ell_i$ with probability one.  Here the support 
can be any subset of $\mathbb{R}^d$ that contains the points in $\mathcal{V}_{\ell_i}$ and no others.    

The framework of Figure~\ref{fig:blockdiagram} with intra-cluster dither as defined above is still quite general since it leaves open the exact specification of support sets and corresponding probability distributions.  In the following we show that it encompasses the important special case of random resampling with replacement from the values within each cluster.  In this 
approach, each transformed sample $\hat{x}_i$ is drawn randomly from the value set $\mathcal{V}_{\ell_i}$ of the corresponding cluster.  A closely related alternative is resampling \emph{without} replacement, in which 
$\hat{x}_i$ corresponding to the same cluster $\ell_i$ are chosen as a random permutation of $\mathcal{V}_{\ell_i}$.  

We focus here on resampling with replacement under the constraint that $\hat{X}$ preserves the distribution of $X$, as in Figure~\ref{fig:blockdiagram}.  This constraint can be met by selecting values within a cluster according to their empirical frequencies, as stated below in Lemma~\ref{lem:resampling}. 
First define $v(i_1,\dots,i_d) = \bigl(v_1(i_1),\dots,v_d(i_d)\bigr)$ to be the multi-dimensional extension of $v_j(i_j)$, noting that some of these combinations of values may not be observed. 
Denote by $n_\ell(i_1,\dots,i_d)$ the number of samples in cluster $\ell$ with value $v(i_1,\dots,i_d)$; $n(i_1,\dots,i_d)$ (without the subscript $\ell$) denotes the corresponding number in all clusters; $n_\ell$ (without the indices $i_j$) denotes the number of samples in cluster $\ell$ ($n_\ell \geq k$ from \eqref{eq:kMemberClustering}); and $n$ is the total number of samples as before. 
Some of these definitions are illustrated in Figure~\ref{fig:partRect} below.  
\begin{lemma}\label{lem:resampling}
If for all samples $i$ and conditioned on cluster $\ell_i$, $\hat{x}_i = v(i_1,\dots,i_d)$ is chosen with probability $n_{\ell_i}(i_1,\dots,i_d) / n_{\ell_i}$, then the transformed quasi-identifiers $\hat{X}$ have the same distribution as $X$. 
\end{lemma}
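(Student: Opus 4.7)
The plan is to apply the law of total probability, conditioning on the cluster assignment. Since the quasi-identifier distribution is taken to be the empirical distribution of the observed $x_i$'s, it suffices to show that for each grid value $v(i_1,\dots,i_d)$, one has $P(\hat{X}=v(i_1,\dots,i_d)) = n(i_1,\dots,i_d)/n$, which is exactly the empirical probability of that value under $X$.

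First I would note that a uniformly random index $i\in\{1,\dots,n\}$ is assigned to cluster $\ell$ with probability $n_\ell/n$, since the clusters partition the $n$ samples. Next, by the hypothesis of the lemma, conditioned on $\ell_i=\ell$ the transformed value satisfies $P(\hat{x}_i=v(i_1,\dots,i_d)\mid \ell_i=\ell)=n_\ell(i_1,\dots,i_d)/n_\ell$. Combining these via total probability gives
\begin{equation*}
P(\hat{X}=v(i_1,\dots,i_d)) = \sum_{\ell=1}^{c} \frac{n_\ell}{n}\cdot\frac{n_\ell(i_1,\dots,i_d)}{n_\ell} = \frac{1}{n}\sum_{\ell=1}^{c} n_\ell(i_1,\dots,i_d).
\end{equation*}
The cancellation of $n_\ell$ is the key algebraic step that makes the cluster structure disappear.

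Finally, since every sample lies in exactly one cluster (by the constraint $\sum_\ell a_{i\ell}=1$ in \eqref{eq:kMemberClustering}), the counts $n_\ell(i_1,\dots,i_d)$ sum over $\ell$ to the total count $n(i_1,\dots,i_d)$ of samples with value $v(i_1,\dots,i_d)$. Hence $P(\hat{X}=v(i_1,\dots,i_d))=n(i_1,\dots,i_d)/n$, which is precisely the empirical distribution used in place of the true distribution of $X$ throughout Section~\ref{sec:privacy:trans}. This yields $\hat{X}\stackrel{d}{=}X$.

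There is no real obstacle here; the only subtle point is to be explicit that ``the distribution of $X$'' in the statement refers to the empirical distribution (as emphasized in the paragraph preceding \eqref{eq:RosenblattInverse2}), so that the cluster-level empirical frequencies $n_\ell(i_1,\dots,i_d)/n_\ell$ aggregate exactly to the global empirical frequencies $n(i_1,\dots,i_d)/n$. Under that interpretation the result follows immediately from the partition property of the clustering and the law of total probability.
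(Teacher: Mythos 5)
Your proposal is correct and follows essentially the same argument as the paper: treat cluster membership as a conditioning random variable with $p_L(\ell)=n_\ell/n$, apply the law of total probability, cancel $n_\ell$, and sum the per-cluster counts to recover the global empirical frequency $n(i_1,\dots,i_d)/n$. Your added remark that ``the distribution of $X$'' means the empirical distribution matches the paper's implicit use of that convention.
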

\begin{proof}
The lemma follows from calculating 
the probability mass function (PMF, denoted generically by $p$) of $\hat{X}$.  Treating cluster membership as a conditioning random variable $L$, we have  
\begin{align*}
p_{\hat{X}}(v(i_1,\dots,i_d)) &= \sum_{\ell=1}^{c} p_L(\ell) p_{\hat{X}\given L}\bigl(v(i_1,\dots,i_d) \given \ell \bigr)\\
&= \sum_{\ell=1}^{c} \frac{n_\ell}{n} \frac{n_\ell(i_1,\dots,i_d)}{n_\ell}\\
&= \frac{n(i_1,\dots,i_d)}{n}\\
&= p_{X}(v(i_1,\dots,i_d)),
\end{align*}
using 
the definition of empirical distribution in the last line. 
\end{proof}

To establish the equivalence between the proposed procedure and resampling with replacement, 
we construct a rectangular partition of $\mathbb{R}^d$ and define support sets accordingly. 
Figure~\ref{fig:partRect} provides an illustration in the two-dimensional case $d=2$.  For $j = 1,\dots,d$, let $\{\mathcal{I}_j(i_j)\}_{i_j=1}^{n_j}$ be a set of intervals that partition the real line such that each $\mathcal{I}_j(i_j)$ contains the observed value $v_j(i_j)$.  These intervals could be defined for example by the midpoints between the $v_j(i_j)$.  The full $d$-dimensional space is partitioned into rectangular cells $\mathcal{C}(i_1,\dots,i_d) = \mathcal{I}_1(i_1) \times \mathcal{I}_2(i_2) \times \dots \times \mathcal{I}_d(i_d)$ formed by Cartesian products of intervals.  Each cell $\mathcal{C}(i_1,\dots,i_d)$ contains a single 
$v(i_1,\dots,i_d)$. 
The \emph{support} $\mathcal{C}_\ell$ of cluster $\ell$ is then defined as the union of cells $\mathcal{C}(i_1,\dots,i_d)$ such that the corresponding $v(i_1,\dots,i_d) \in \mathcal{V}_\ell$.  

\begin{figure}
\centering
\includegraphics[width=0.5\columnwidth]{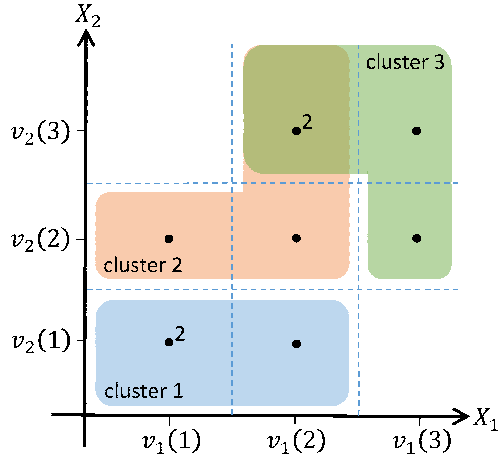} 
\caption{Partition of a two-dimensional ($d=2$) quasi-identifier space into rectangular cells $\mathcal{C}(i_1, i_2)$, each containing a single value $v(i_1, i_2)$.  Different shadings indicate the support sets of different clusters, which may overlap.  Cells $\mathcal{C}(1,1)$ and $\mathcal{C}(2,3)$ each contain $n(1,1) = n(2,3) = 2$ samples; all other cells with dots contain $n(i_1,i_2)=1$ sample. In $(1,1)$, both samples belong to cluster $1$ ($n_1(1,1) = 2$), while in $(2,3)$, the samples are split between clusters $2$ and $3$ ($n_2(2,3) = n_3(2,3) = 1$).}
\label{fig:partRect}
\end{figure}

We now define a dither distribution by the 
probabilities 
\begin{equation}\label{eq:probCell}
\Pr\bigl(\tilde{X} \in \mathcal{C}(i_1,\dots,i_d) \given \ell \bigr) = \frac{n_{\ell}(i_1,\dots,i_d)}{n_\ell}. 
\end{equation}
Conditioned on $\tilde{X} \in \mathcal{C}(i_1,\dots,i_d)$ (and independent of cluster $\ell$), let $\tilde{X}$ be uniformly distributed over $\mathcal{C}(i_1,\dots,i_d)$.

\begin{theorem}\label{thm:resampling}
For dither $\tilde{X}$ with piecewise-uniform conditional distribution defined by \eqref{eq:probCell}, the procedure in Figure~\ref{fig:blockdiagram} is equivalent to resampling with replacement from the cluster values in $\mathcal{V}_\ell$ with probabilities $n_{\ell}(i_1,\dots,i_d) / n_\ell$.
\end{theorem}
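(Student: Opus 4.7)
The plan is to show that the Rosenblatt transformation in Figure~\ref{fig:blockdiagram} acts as a deterministic map from the cells $\mathcal{C}(i_1,\dots,i_d)$ to their associated discrete values $v(i_1,\dots,i_d)$. Once this is established, then conditional on cluster $\ell$, we immediately get $\hat{X} = v(i_1,\dots,i_d)$ with probability $\Pr(\tilde{X} \in \mathcal{C}(i_1,\dots,i_d) \given \ell) = n_\ell(i_1,\dots,i_d)/n_\ell$ by \eqref{eq:probCell}, which is precisely the resampling-with-replacement probability in Lemma~\ref{lem:resampling}.

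As a preliminary step I would compute the joint distribution of $\tilde{X}$ unconditionally on cluster. Mixing \eqref{eq:probCell} over $\ell$ with weights $n_\ell/n$ yields $\Pr(\tilde{X} \in \mathcal{C}(i_1,\dots,i_d)) = n(i_1,\dots,i_d)/n$, with $\tilde{X}$ uniform inside each cell. Integrating out the later coordinates then gives marginal and conditional cell-level probabilities of $\tilde{X}$ that coincide exactly with the marginal and conditional frequencies of the empirical distribution of $X$ used on the inverse side in \eqref{eq:RosenblattInverse2}.

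The main body of the argument is an induction on the coordinate index $j = 1,\dots,d$. The inductive hypothesis is that whenever $\tilde{x}_i^{j-1} \in \mathcal{I}_1(i_1) \times \dots \times \mathcal{I}_{j-1}(i_{j-1})$, the inverse CDF has already produced $\hat{x}_i^{j-1} = (v_1(i_1),\dots,v_{j-1}(i_{j-1}))$. In the inductive step I would exploit piecewise uniformity: $F_{\tilde{X}_j\given\tilde{X}^{j-1}}(\cdot \given \tilde{x}_i^{j-1})$ restricted to $\mathcal{I}_j(i_j)$ is affine and maps this interval bijectively onto $(F_{X_j\given X^{j-1}}(v_j(i_j-1) \given \hat{x}_i^{j-1}), F_{X_j\given X^{j-1}}(v_j(i_j) \given \hat{x}_i^{j-1})]$, because by the preliminary step both the length and the left endpoint of the image match on the two sides. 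Feeding the resulting $u_{ij}$ into \eqref{eq:RosenblattInverse2} then returns $v_j(i_j)$, completing the induction.

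Combining the deterministic mapping with the cell probabilities \eqref{eq:probCell} yields the claimed equivalence. The most delicate part will be checking that the left endpoint of the forward-CDF image interval really coincides with $F_{X_j\given X^{j-1}}(v_j(i_j-1) \given \hat{x}_i^{j-1})$, and not merely that the two intervals have equal length; this requires verifying that the ordering and cumulative conditional frequencies of $\tilde{X}$ over the cells $\mathcal{I}_j(1),\dots,\mathcal{I}_j(i_j-1)$ agree with those of $X$ over the values $v_j(1),\dots,v_j(i_j-1)$. Once that alignment is pinned down, the remaining work is routine bookkeeping across the $d$ dimensions.
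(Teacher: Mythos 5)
Your proposal is correct and follows essentially the same route as the paper's proof: an induction over the coordinates showing that Rosenblatt's transformation deterministically maps each cell $\mathcal{C}(i_1,\dots,i_d)$ to its value $v(i_1,\dots,i_d)$ (almost surely), with piecewise uniformity ensuring that the conditional cell probabilities of $\tilde{X}$ coincide with the empirical conditional frequencies of $X$, so that \eqref{eq:probCell} carries over directly to the resampling probabilities. Your preliminary step of first writing the unconditional law of $\tilde{X}$ as a mixture of uniforms over cells and marginalizing is just a repackaging of the paper's total-probability and density-ratio computation, and the "delicate" left-endpoint alignment you flag is exactly the cumulative matching the paper establishes in \eqref{eq:thm:resampling1}--\eqref{eq:thm:resampling2} and \eqref{eq:thm:resampling4}.
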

\begin{proof}
Please see Appendix~\ref{app:thm}.
\end{proof}

In the proof of Theorem~\ref{thm:resampling}, specifically in going from \eqref{eq:thm:resampling3} to \eqref{eq:thm:resampling4}, the assumption of a piecewise-uniform dither distribution is not strictly necessary.  
We have chosen to make this assumption because it simplifies the proof and interpretation.  Furthermore, in the one-dimensional case $d=1$, the proof makes no reference to uniformity or any property of the cell-conditional distributions other than continuity.  In fact, the cell probability requirements \eqref{eq:probCell} can be further relaxed, specifically by merging contiguous cells that all belong in full to the same cluster, i.e., cells such that $n(i_1) = n_\ell(i_1)$ for a single $\ell$.  We index these merged cells by $i'_1$ and denote by $n_\ell(i'_1)$ the number of samples in cluster $\ell$ and cell $i'_1$.  Let the dither distribution satisfy 
\begin{equation}\label{eq:probCell1}
\Pr\bigl(\tilde{X} \in \mathcal{C}(i'_1) \given \ell \bigr) = \frac{n_{\ell}(i'_1)}{n_\ell} \quad \forall \; i'_1. 
\end{equation}
\begin{cor}\label{cor:resampling}
In the one-dimensional case $d=1$, Theorem~\ref{thm:resampling} holds if the dither $\tilde{X}$ satisfies \eqref{eq:probCell1} for cells $i'_1$ formed by first isolating unique observed values $v_1(i_1)$ and then merging contiguous cells that belong in full to the same cluster, as described above.
\end{cor}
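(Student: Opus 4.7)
The plan is to adapt the proof of Theorem~\ref{thm:resampling} to the relaxed setting by exploiting two features specific to $d=1$. First, as noted in the paragraph preceding the corollary, the one-dimensional proof depends only on continuity of the cluster-conditional dither distribution plus the anchoring identity $F_{\tilde{X}}(\text{right endpoint of }\mathcal{C}(i_1)) = F_X(v_1(i_1))$; no piecewise-uniform assumption is invoked. Second, merging contiguous cells that belong in full to a single cluster preserves the anchoring identity at the outer endpoints of every merged cell, because the accumulated marginal mass up through those endpoints is unchanged by the merging operation.

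Using these, I would split the argument into two cases. For cells that remain unmerged, the proof of Theorem~\ref{thm:resampling} carries over essentially verbatim, now under the weaker continuity-only assumption. For a merged cell $\mathcal{C}(i'_1)$ spanning unique values $v_1(a),\dots,v_1(b)$ wholly inside a single cluster $\ell^*$, I would first observe that only cluster $\ell^*$ contributes dither mass inside $\mathcal{C}(i'_1)$, so on that cell $F_{\tilde{X}}$ is an affine function of $F_{\tilde{X}\given L=\ell^*}$ with slope $n_{\ell^*}/n$ and offset $F_X(v_1(a-1))$. Combining this with the outer-boundary anchoring reduces the event $u_i \in \bigl(F_X(v_1(j-1)),F_X(v_1(j))\bigr]$ to a linear inequality on $F_{\tilde{X}\given L=\ell^*}(\tilde{x}_i)$. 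Applying the probability integral transform --- which makes $F_{\tilde{X}\given L=\ell^*}(\tilde{x}_i)$ uniform conditional on $L=\ell^*$ --- the cell-conditional probability of landing in the sub-interval for $v_1(j)$ comes out to $n(j)/n_{\ell^*}(i'_1)$, which when multiplied by $\Pr(\tilde{x}_i \in \mathcal{C}(i'_1) \given L = \ell^*) = n_{\ell^*}(i'_1)/n_{\ell^*}$ yields the target probability $n_{\ell^*}(j)/n_{\ell^*}$. The case $\ell \neq \ell^*$ is vacuous since both the merged-cell dither mass and $n_\ell(j)$ vanish for $a \leq j \leq b$.

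The main obstacle is that inside a merged cell, $F_{\tilde{X}}$ need not coincide with $F_X$ at the interior unique-value boundaries $v_1(j)$, $a<j<b$, whereas piecewise-uniformity in the theorem would have forced exactly such pointwise alignment and thus made the argument trivially cell-by-cell. The resolution hinges on the fact that merging is only permitted over fully-single-cluster stretches: within such a stretch the marginal and cluster-conditional CDFs differ by a single affine map, and the probability integral transform applied to the cluster-conditional CDF supplies the missing alignment probabilistically, delivering the correct per-value frequencies without needing any pointwise agreement between $F_{\tilde{X}}$ and $F_X$ at those interior boundaries.
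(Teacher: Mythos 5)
Your proof is correct and follows essentially the same route as the paper's: the anchoring of $F_{\tilde{X}}$ to $F_{X}$ at the outer endpoints of each merged (single-cluster) cell, together with the uniformity of the Rosenblatt-transformed variable, forces the values inside a merged cell to be selected in proportion to their empirical frequencies, which combined with the relaxed cell probabilities \eqref{eq:probCell1} yields the resampling probabilities $n_{\ell}(i_1)/n_{\ell}$. The only difference is one of detail rather than substance: the paper invokes the marginal uniformity of $u$ directly, whereas you carry out the equivalent computation cluster-conditionally via the affine relation between $F_{\tilde{X}}$ and $F_{\tilde{X}\given L=\ell^*}$ on the merged cell and the probability integral transform, which amounts to a more explicit rendering of the same argument.
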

\begin{proof}
Following the proof of Theorem~\ref{thm:resampling}, if $\tilde{x}$ belongs to a cell $\mathcal{C}(i'_1)$, then $u = F_{\tilde{X}}(\tilde{x})$ lies in an interval that maps to the set of observed values $v_1(i_1)$ contained in $\mathcal{C}(i'_1)$, analogous to \eqref{eq:thm:resampling2}.  Since Rosenblatt's transformation ensures that $u$ is uniformly distributed, the individual values $v_1(i_1)$ are then selected according to their empirical frequency.
\end{proof}


\subsubsection{Gaussian dither}
\label{sec:privacy:gaussian}

In this subsection, we assume that the centroid $\bar{x}_\ell$ is the mean of the samples in cluster $\ell$, which results when the weighted Euclidean distortion function \eqref{eq:distortionEuclidean} is chosen.  For each cluster $\ell$, we also compute 
the empirical covariance matrix $\Sigma_\ell$ 
from the $n_\ell$ samples in the cluster. 
Conditioned on cluster $\ell_i$, we then generate $\tilde{x}_i$ as an i.i.d.\ sample from the Gaussian distribution $\mathcal{N}\left(\bar{x}_{\ell_i}, \Sigma_{\ell_i} 
+ \alpha I \right)$. 
The extra bit of covariance with parameter $\alpha > 0$ accounts for rank-deficient $\Sigma_\ell$, 
which occurs when the number of distinct values in the cluster (bounded by $n_\ell \geq k$) is less than the dimension $d$. 

%

The above specification implies that the dither $\tilde{X}$ is distributed as a Gaussian mixture with $c$ mixture components, after marginalizing over the clusters.  Closed-form expressions can therefore be given for the first half of Rosenblatt's transformation \eqref{eq:RosenblattForward}. 
Denote by $\Phi(z; \mu, \sigma^2)$ the Gaussian CDF parameterized by mean $\mu$ and variance $\sigma^2$, and let $\Lambda_\ell = \Sigma_\ell + \alpha I$.  Then for $j = 1$ we have 
\begin{equation}
\label{eq:gaussmixture1}
	F_{\tilde{X}_1}(\tilde{x}_1) = \sum_{\ell=1}^c p_L(\ell) F_{\tilde{X}_1\given L}(\tilde{x}_1\given \ell) = \sum_{\ell=1}^c \frac{n_\ell}{n} \Phi\bigl(\tilde{x}_1; \bar{x}_{\ell 1}, \Lambda_{\ell,1,1}\bigr), 
\end{equation}
where $\Lambda_{\ell,j,j}$ is the $(j,j)$ element of $\Lambda_\ell$.  For $j > 1$,
\begin{equation}\label{eq:gaussmixturej}
F_{\tilde{X}_j\given\tilde{X}^{j-1}}\bigl(\tilde{x}_j \given \tilde{x}^{j-1}\bigr) = \sum_{\ell=1}^c p_{L\given\tilde{X}^{j-1}}(\ell \given \tilde{x}^{j-1}) F_{\tilde{X}_j\given\tilde{X}^{j-1},L}\bigl(\tilde{x}_j \given \tilde{x}^{j-1}, \ell\bigr).
\end{equation}
Since $\tilde{X} \given L$ is multivariate Gaussian, $\tilde{X}_j \given \tilde{X}^{j-1},L$ is also Gaussian with parameters given by 
\begin{align*}
\mu_{\ell j} &= \bar{x}_{\ell j} - \Lambda_{\ell,j,1:j-1} \bigl(\Lambda_{\ell,1:j-1,1:j-1} \bigr)^{-1} \bigl(\tilde{x}^{j-1} - \bar{x}_\ell^{j-1}\bigr)\\
\sigma^2_{\ell j} &= \Lambda_{\ell,j,j} - \Lambda_{\ell,j,1:j-1} \bigl(\Lambda_{\ell,1:j-1,1:j-1} \bigr)^{-1} \Lambda_{\ell,1:j-1,j},
\end{align*}
where $\Lambda_{\ell,j,1:j-1}$ is a row vector consisting of the first $j-1$ columns of the $j$th row of $\Lambda_\ell$ (analogously for $\Lambda_{\ell,1:j-1,j}$), and $\Lambda_{\ell,1:j-1,1:j-1}$ is the submatrix consisting of the first $j-1$ rows and columns of $\Lambda_{\ell}$.  Hence
\begin{equation}\label{eq:CDFgauss}
F_{\tilde{X}_j\given\tilde{X}^{j-1},L}\bigl(\tilde{x}_j \given \tilde{x}^{j-1}, \ell\bigr) = \Phi\bigl(\tilde{x}_j; \mu_{\ell j}, \sigma^2_{\ell j}\bigr).
\end{equation}
The conditional probabilities $p_{L\given\tilde{X}^{j-1}}(\ell \given \tilde{x}^{j-1})$ in \eqref{eq:gaussmixturej} can be computed recursively over $j=1,\dots,d$ using Bayes' rule:
\begin{align*}
p_{L\given\tilde{X}^{j}}(\ell \given \tilde{x}^{j})
&= \frac{p_{L\given\tilde{X}^{j-1}}(\ell \given \tilde{x}^{j-1}) f_{\tilde{X}_j\given\tilde{X}^{j-1},L}\bigl(\tilde{x}_j \given \tilde{x}^{j-1}, \ell\bigr)}{\sum_{\ell'=1}^{c} p_{L\given\tilde{X}^{j-1}}(\ell' \given \tilde{x}^{j-1}) f_{\tilde{X}_j\given\tilde{X}^{j-1},L}\bigl(\tilde{x}_j \given \tilde{x}^{j-1}, \ell'\bigr)}\\
&= \frac{p_{L\given\tilde{X}^{j-1}}(\ell \given \tilde{x}^{j-1}) \phi\bigl(\tilde{x}_j; \mu_{\ell j}, \sigma^2_{\ell j}\bigr)}{\sum_{\ell'=1}^{c} p_{L\given\tilde{X}^{j-1}}(\ell' \given \tilde{x}^{j-1}) \phi\bigl(\tilde{x}_j; \mu_{\ell' j}, \sigma^2_{\ell' j}\bigr)},
\end{align*}
where the second equality follows from \eqref{eq:CDFgauss} and $\phi$ denotes a Gaussian PDF.

\section{Empirical Results}
\label{sec:empirical}

In this section, we discuss empirical results obtained using 
real-world data motivated by the problem of health insurance market risk assessment.  Section~\ref{sec:mra} further describes 
the application.  Section~\ref{sec:empirical:data} discusses data sources and simulation of the different populations.  Section~\ref{sec:empirical:reID} assesses the reidentification risks of the proposed privacy preservation methods as applied to this data.  Section~\ref{sec:empirical:privacy_market} presents results on health care expenditure prediction under three scenarios:  with privacy constraints but without market shift, without privacy constraints but with market shift, and with both privacy constraints and market shift.

\subsection{Market Risk Assessment}
\label{sec:mra}

As mentioned in the introduction, market risk assessment involves estimating the health care cost of individuals in a new market who are likely to enroll in an insurer's plan.  These estimates can then be appropriately summarized into risk statistics for decision making.  Cost information for a new market is typically not available for competitive and other reasons.  Instead, insurers have access to cost data for their members in an existing market, plus demographic data for the existing and new markets from public sources, including age, gender, income, veteran status, smoking status, and place of residence.  This situation suggests a covariate shift approach as in Section~\ref{sec:background} in which a predictive model relating demographic variables, $X$, to health care cost, $Y$, is learned from the existing market population and is then applied to the new market's demographic data, taking into account the different demographic distributions in the existing and new markets, $p_X$ and $q_X$.  Since the cost $Y$ is regarded as being continuous, the learning problem \eqref{eqn:riskEmp} is one of regression.  Any regression technique can be used for this purpose, in particular those that account for the skewness and heteroscedasticity of health care costs, for example ordinary least-squares with log-transformed data, two-part models, generalized linear models, and multiplicative regression \cite{DiehrYAHL1999,BasuM2009,WeiRKM2014}. These correspond to different choices for the function class $\mcF$ and loss function $\mcL$ in \eqref{eqn:riskEmp}. 

However, market risk assessment differs from the standard covariate shift setting in Section~\ref{sec:background} in having a distinction between member populations and larger market populations, since only a subset of the latter enroll in a given health insurance plan.  To denote the different populations, we use the binary variables $E$ and $M$.  The variable $E$ indicates enrollment in an insurance company's plan ($E=1$ means enrolled), and the variable $M$ differentiates the existing market from the new market ($M=1$ means new market).  Since training data with costs comes from the insurance company's data on current plan members, the training distribution, previously denoted as $p_X$ in Section~\ref{sec:background}, is now denoted as $p_{X\given E,M}(x\given{e=1,m=0})$, referring to enrollees in the current market.  Likewise, the test distribution $q_X$ is $p_{X\given E,M}(x\given e=1,m=1)$ for enrollees in the new market.

In some cases it may be possible to estimate 
$p_{X\given E,M}(x \given 1,1)$ directly if enough is known about potential enrollees in the new market.  If this is true then the basic covariate shift framework in Section \ref{sec:background} applies. 
The more general and usual case is that $p_{X\given E,M}(x \given 1,1)$ cannot be estimated directly. 
In this scenario, besides the current member distribution $p_{X\given E,M}(x \given 1,0)$, we may still assume 
that demographic distributions for the current and new markets are available, corresponding to $p_{X\given M}(x\given 0)$ and $p_{X\given M}(x\given 1)$ respectively. These distributions are related by Bayes' rule,
\begin{equation}\label{eqn:BayesRule}
p_{X\given E,M}(x\given 1, m) = \frac{p_{E\given X,M}(1\given x,m) p_{X\given M}(x\given m)}{p_{E\given M}(1\given m)}, \quad m=0,1.
\end{equation}
Taking the ratio of $m=1$ to $m=0$ gives 
\begin{equation}\label{eqn:BayesRuleRatio}
\frac{p_{X\given E,M}(x\given 1, 1)}{p_{X\given E,M}(x\given 1, 0)} \propto \frac{p_{E\given X,M}(1\given x,1)}{p_{E\given X,M}(1\given x,0)} \frac{p_{X\given M}(x\given 1)}{p_{X\given M}(x\given 0)}
\end{equation}
as functions of $x$. 

In this work, we 
make the assumption that $p_{E\given X,M}(1\given x,m)$, the probability of enrollment conditioned on the predictor variables and market, is actually independent of the market $m$ once $x$ is fixed. In other words, $E$ and $M$ are conditionally independent given $X$ and $p_{E\given X,M}(1\given x,m) = p_{E\given X}(1\given x)$.  This is a reasonable starting 
assumption positing that enrollment depends 
on demographic variables such as age, sex, etc., but does not 
depend on which market the individual belongs to once those demographic variables are specified.  This assumption may be later modified by an insurance market expert to account for additional market factors such as the level of competition.  

With the conditional independence assumption, \eqref{eqn:BayesRuleRatio} simplifies to 
\[
p_{X\given E,M}(x\given 1, 1) \propto p_{X\given E,M}(x\given 1, 0) \frac{p_{X\given M}(x\given 1)}{p_{X\given M}(x\given 0)}.
\]
Since the training samples are distributed according to $p_{X\given E,M}(x\given 1, 0)$ while the test samples are distributed according to $p_{X\given E,M}(x\given 1, 1)$, the importance weighting is therefore $w(x) = p_{X\given M}(x\given 1) / p_{X\given M}(x\given 0)$, 
taking the place of $w(x) = q_X(x)/p_X(x)$ in Section \ref{sec:background}.  The importance weights may be estimated in the same manner as before, for example non-parametrically using empirical distributions, or using logistic regression.  We evaluate both of these covariate shift methods in Sections~\ref{sec:empirical:est} and \ref{sec:empirical:privacy}.   

A second notable feature of market risk assessment, and insurance applications in general, is the emphasis on 
%
%
aggregate predictions of the average or total cost for groups of individuals rather than individual-level predictions.  For example, one may be interested in 
the average cost for a new enrollee population as a whole or for segments of the population.  Given a regression model constructed as described above, aggregate predictions can be obtained simply by averaging the predictions for each individual in the group.  In terms of prediction error, this averaging has the effect of greatly attenuating the error variance: For a group of $m$ i.i.d.\ individuals, the error variance decreases by a factor of $m$.  As a consequence, for large $m$ the bias of the predictor, $b(\Yh) = \E[\Yh] - \E[Y]$, becomes the dominant measure of error as compared to variance or $R^2$, a common representation of individual-level MSE.  We refer the reader to \cite{WeiRV2015} for a straightforward calculation decomposing aggregate prediction error into bias and variance/$R^2$ components.  Thus in reporting prediction results in Section~\ref{sec:empirical:privacy_market}, we focus more on bias performance rather than variance or $R^2$. 

\subsection{Description of Data}
\label{sec:empirical:data}

We use publicly-available Medical Expenditure Panel Survey (MEPS)  data, which shares many characteristics with actual health cost data from insurance companies that we have worked with in the recent past but cannot include in this paper due to its confidentiality.  Based on large-scale surveys produced by the United States Department of Health and Human Services' Agency for Healthcare Research and Quality, MEPS contains the annual health care cost and demographic information of people across the United States.  However, since it does not come from an insurance company, there is neither a concept of a market in the data nor 
of enrollment in a company's plan.  
Thus in order to perform market risk assessment, we define two market populations and enrolled subsets of these populations as described below.  


We consider a scenario in which an insurance company is currently active in many areas that collectively are representative of the United States as a whole.  The company is deciding whether to enter specific rating areas in California, where a rating area consists of one or more counties.  Therefore the demographic distribution of the existing market, $p_{X\given M}(x \given 0)$, can be taken to be that of the United States, while the new market distributions $p_{X\given M}(x \given 1)$ correspond to California rating areas.  To simulate these two markets, the MEPS data set is randomly and evenly split into training and test sets.  All results reported in Sections~\ref{sec:empirical:est}--\ref{sec:empirical:privacy} are averaged over $200$ such splits. The existing market distribution $p_{X\given M}(x \given 0)$ is estimated empirically directly from the training set.  The distribution $p_{X\given M}(x \given 1)$ is obtained by reweighing samples from the test set according to the demographics of each rating area, relative to the national baseline represented by MEPS.  Rating area-specific demographics are obtained from the American Community Survey (ACS) \cite{ACS2005}.  

Once the market distributions are created, the enrollment in the company's plan must also be simulated.  We focus on the dependence of enrollment on age.  To generate the existing plan distribution $p_{X\given E,M}(x \given 1,0)$, samples in the existing market data set are reweighed based on the age distribution in the initial enrollment period of the Health Insurance Marketplaces created by the Affordable Care Act \cite{HHS2014,ASPE2014}, again relative to the national baseline.  The resulting distribution differs notably from that of the larger market, $p_{X\given M}(x \given 0)$, in having few children ($<18$) and seniors ($>65$).  The age-dependent enrollment probabilities $p_{E\given X}(1\given x)$ induced by this procedure are then applied to samples in the new market to simulate plan enrollment in the new market, $p_{X\given E,M}(x \given 1,1)$. 

We also consider a second, simpler scenario where the demographic distribution of the existing and the new markets, $p_{X|M}(x | 0)$ and $p_{X|M}(x | 1)$ are both taken to be that of the United States. The MEPS data set is randomly and evenly split into training and test sets, but the test set is not reweighted in this case. We present results for this scenario in Section~\ref{sec:empirical:privacy_noshift} for $200$ such splits. The plan distributions are simulated using the same procedure described above and hence are also equal to each other.


The specific MEPS data set we consider is for the year 2005, containing just over $15000$ weighted records, and the demographic variables 
are gender, age (binned into 8 groups similar to those in \cite{ASPE2014}), education level (0--5), and income level (categories 0--4 relative to the federal poverty level).  The specific cost variable we use is known in MEPS as ``total expenditure'' (TOTEXP) over the year.

\subsection{Reidentification Risk}
\label{sec:empirical:reID}

In this section, we empirically evaluate the reidentification risks of the privacy preservation methods proposed in Section~\ref{sec:privacy}.  Here reidentification risk refers to the probability of correctly retrieving an individual's record from an anonymized data set using their true quasi-identifiers $x$.  It is seen that for a given anonymity parameter $k$, the distribution-recovering transformations discussed in Section~\ref{sec:privacy:trans} have reidentification risks at least as good as or better than standard $k$-anonymization.

Enrollment data for the existing market (corresponding to $p_{X\given E,M}(x\given 1,0)$) is simulated from MEPS as described in Section~\ref{sec:empirical:data}.  Recall that this is the data set that requires privacy protection because it contains health care expenditures.  Each record in the data set is assigned a record number.  The data set then undergoes one of three procedures: 1) standard $k$-anonymization through clustering (Section~\ref{sec:privacy:clustering}), 2) clustering plus distribution recovery via resampling with replacement (Section~\ref{sec:privacy:intra}), and 3) clustering plus distribution recovery via Gaussian dither (Section~\ref{sec:privacy:gaussian}).  The result is an anonymized data set with modified 
quasi-identifiers $\hat{x}$.  Before clustering, all variables $x$ and $y$ are standardized.  The means and variances are restored after the privacy transformations.  We set $w = 1$ in the distortion function \eqref{eq:distortionEuclidean} for clustering and add diagonal loading $\alpha = 1/3$ to the Gaussian dither.

For reidentification, each record in the original data set is matched to the anonymized data set based on quasi-identifiers, $x$ and $\hat{x}$ respectively.  A minimum Euclidean distance criterion is used so that a match is always returned, even if inexact ($\hat{x} \neq x$).  In the 
frequent case of multiple matches 
(multiple $\hat{x}$ at the minimum distance), one of the anonymized records is selected uniformly at random.  A reidentification is declared if the record numbers of the original and matched records are the same.

We first discuss the reidentification performance to be expected under the above scheme.  Successful reidentification requires two events to occur: 1) the original quasi-identifiers $x$ are mapped to a minimum-distance point $\hat{x}$ (often but not necessarily identical to $x$), and 2) the correct record is selected from multiple matches.  Conventional $k$-anonymization focuses on limiting the probability of the second event to no more than $1/k$ by ensuring that every combination of quasi-identifier values (termed an equivalence class) in the anonymized data set is shared by at least $k$ records. The probability can be less than $1/k$ if some equivalence classes have more than $k$ records. In contrast, for the proposed distribution-preserving methods, the distribution over equivalence classes follows that of the original data set and thus cannot be expected to satisfy the $k$-record requirement.  Instead it is the probability of the intersection of the two events that is controlled at the same $1/k$ level.  While an analysis of the general case is not straightforward, for the case of resampling with replacement within a cluster of size $k$ with no values in common with other clusters, it can be verified that the probability of the intersection is exactly $1/k$, as expected from symmetry.  If there are common values, then the probability is generally reduced because the second event of correct selection becomes less likely if a common value is matched.  In the case of Gaussian dither, the first event becomes less likely because there is some probability of mapping $x$ to a more distant point outside the cluster.

With this intuition in mind, we now discuss empirical results obtained by averaging multiple trials of the above anonymization and reidentification procedure.  The randomness in each trial is due to selection from multiple matches in reidentification as well as dither in the distribution-preserving methods.  Since the reidentification algorithm depends on a record only through the equivalence class of $x$, we report reidentification frequencies by equivalence class, of which there are $310$ in the original data set.  

\begin{figure}[h!]
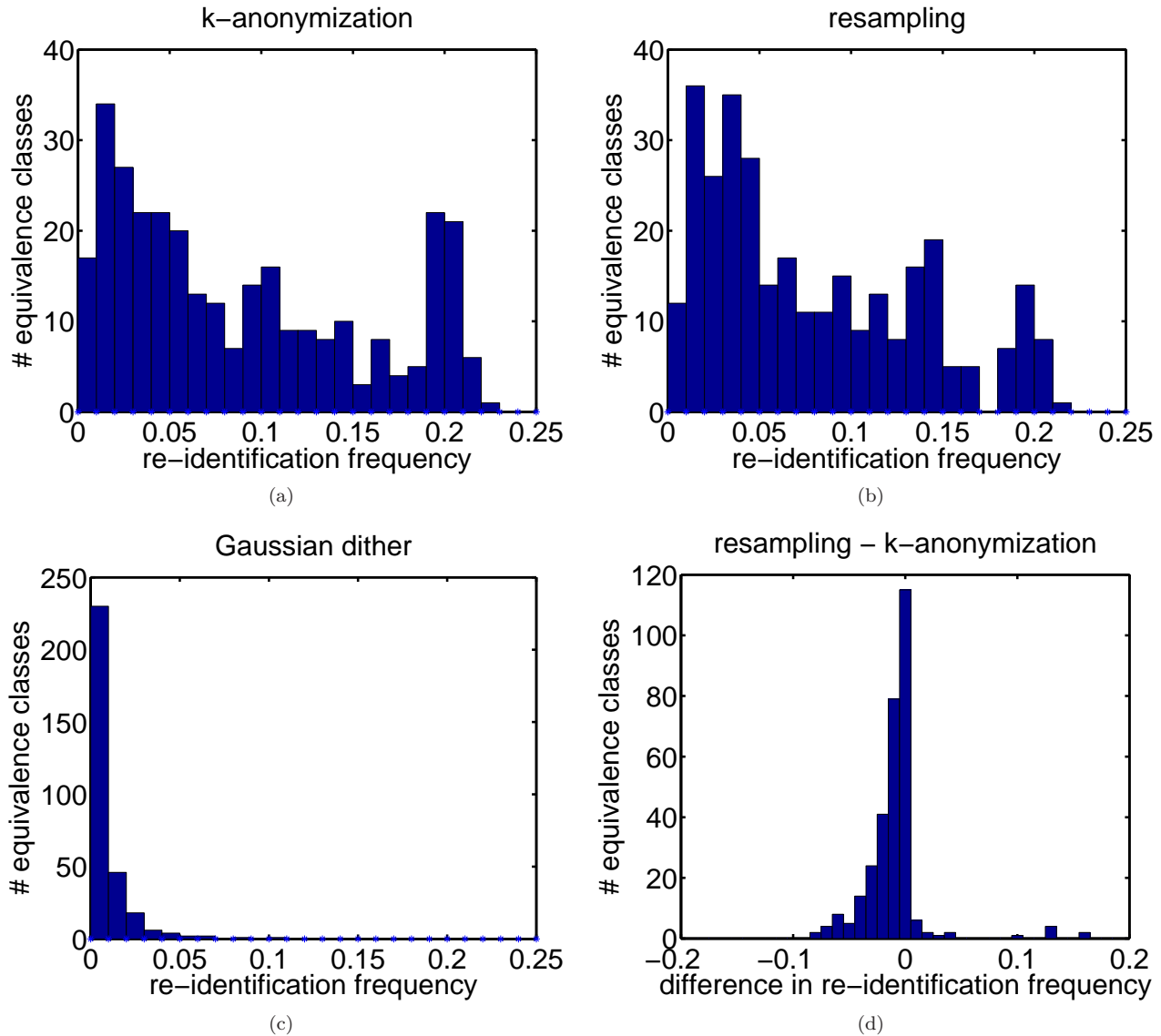

\centerline{
\subfigure[]{
\includegraphics[width=0.50\columnwidth]{reID5kanon.eps}
\label{fig:reID5kanon}}
\subfigure[]{ 
\includegraphics[width=0.50\columnwidth]{reID5resamp.eps}
\label{fig:reID5resamp}}
}
\centerline{
\subfigure[]{ \includegraphics[width=0.50\columnwidth]{reID5gauss.eps}
\label{fig:reID5gauss}}
\subfigure[]{ 
\includegraphics[width=0.50\columnwidth]{reID5diff.eps}
\label{fig:reID5diff}}
}
\caption{Histograms of reidentification frequencies over $1000$ trials for anonymity parameter $k = 5$.  The proposed distribution-preserving method using (b) resampling with replacement has similar reidentification risk as (a) standard $k$-anonymization, while the distribution-preserving method using (c) Gaussian dither has substantially lower risk.  In (d), a histogram of differences in reidentification frequencies between the resampling method and $k$-anonymization indicates that the former is better for a large majority of equivalence classes.}
\label{fig:reID5}
\end{figure}
Figure~\ref{fig:reID5} shows histograms of reidentification frequencies over equivalence classes for the three privacy protection methods, $k = 5$, and $1000$ trials.  For standard $k$-anonymization in Figure~\ref{fig:reID5kanon}, the reidentification frequency ranges from $0$ to approximately $1/k = 1/5$. As explained above, the distribution reflects the fact that many equivalence classes in the original data set already exceed $5$ members, in some cases by a large factor. Similarly large equivalence classes are maintained after clustering.  The few reidentification frequencies greater than $1/5$ can be attributed to sampling error.  In the worst case, the reidentification frequency is proportional to a binomial random variable with parameters $1000$ and $1/5$, and hence the observed frequency may exceed 
$1/5$.  

For the distribution-preserving method using resampling, Figure~\ref{fig:reID5resamp} shows that the reidentification risk is similar to that of $k$-anonymization.  The histogram depends again on the range of equivalence class sizes and also on the prevalence of common values between clusters, as mentioned earlier.  On the other hand, the Gaussian dither method in Figure~\ref{fig:reID5gauss} has much lower reidentification risk. This can be attributed to the choice of diagonal loading $\alpha = 1/3$, which causes the Gaussian dither to frequently map original quasi-identifiers $x$ to points $\hat{x}$ that are beyond minimum distance. Reducing $\alpha$ would increase the reidentification risk.  Figure~\ref{fig:reID5diff} provides a more detailed comparison between the resampling method and $k$-anonymization by plotting the histogram of differences in reidentification frequencies.  For a large majority of equivalence classes and $k = 5$, resampling offers slightly better privacy protection than $k$-anonymization.  This is partly offset by worse protection for a few equivalence classes.  Figure~\ref{fig:reID10} shows that all of the above patterns hold for $k = 10$ and $2000$ trials.

\begin{figure}[h!]
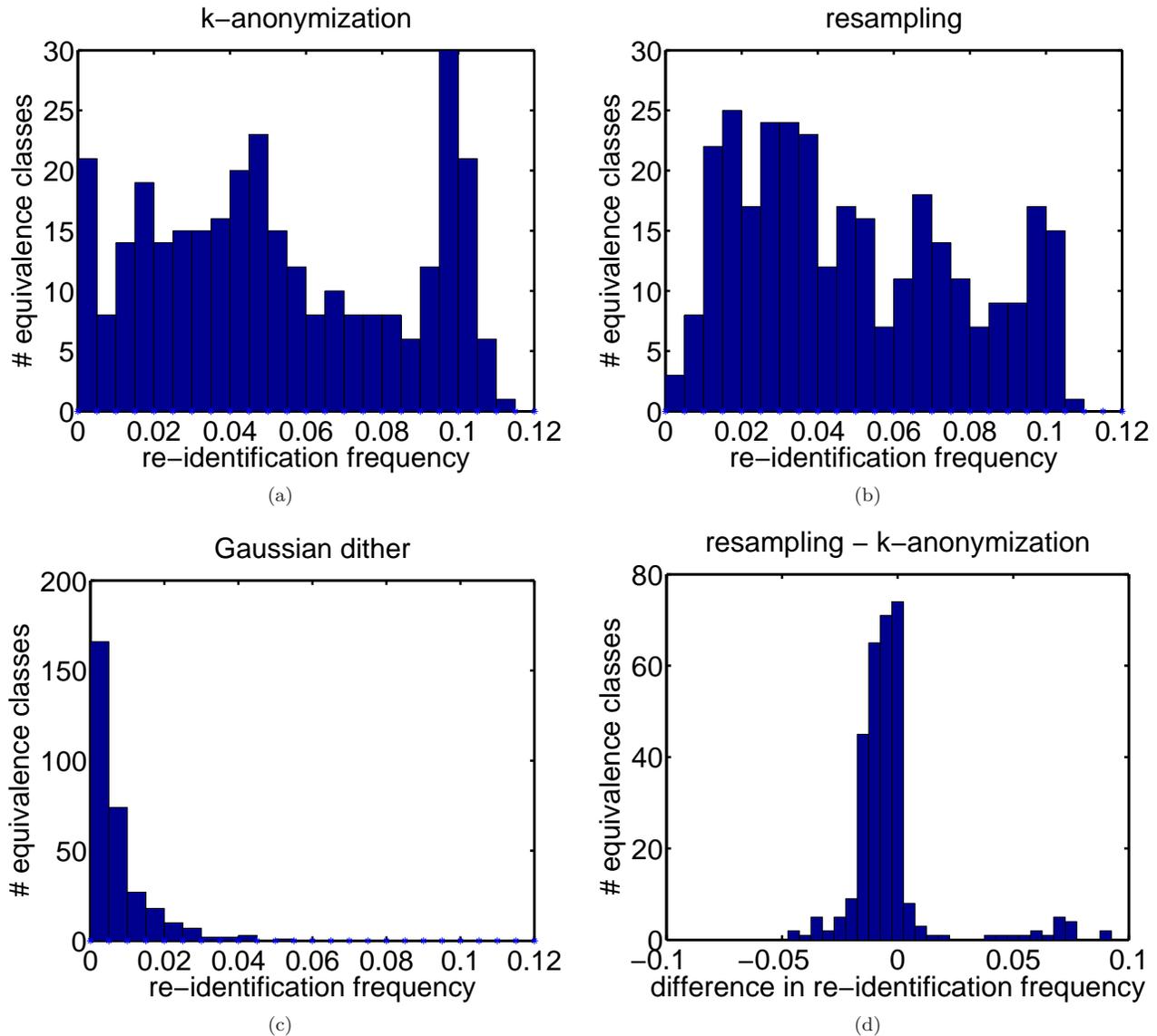

\centerline{
\subfigure[]{ \includegraphics[width=0.50\columnwidth]{reID10kanon.eps}
\label{fig:reID10kanon}}
\subfigure[]{
\includegraphics[width=0.50\columnwidth]{reID10resamp.eps}
\label{fig:reID10resamp}}
}
\centerline{
\subfigure[]{ \includegraphics[width=0.50\columnwidth]{reID10gauss.eps}
\label{fig:reID10gauss}}
\subfigure[]{ 
\includegraphics[width=0.50\columnwidth]{reID10diff.eps}
\label{fig:reID10diff}}
}
\caption{Histograms of reidentification frequencies over $2000$ trials for anonymity parameter $k = 10$.}
\label{fig:reID10}
\end{figure}

For larger $k$, the reidentification probabilities decrease accordingly, and direct empirical validation on a per-equivalence-class basis would require increasing numbers of trials to combat the binomial sampling error mentioned above.  As an alternative, in Figure~\ref{fig:reIDk} we summarize performance by plotting the average reidentification frequency over all records and $200$ trials as a function of $k$.  In all cases, these average reidentification frequencies fall below the nominal value of $1/k$ due to the respective reasons given earlier in this section. The margin is most substantial for $k < 50$ because of large equivalence classes. For smaller $k$, the resampling method happens to yield slightly better privacy than $k$-anonymization while the Gaussian dither method is much stronger, in agreement with Figs.~\ref{fig:reID5} and \ref{fig:reID10}.  As $k$ increases, $k$-anonymization and the resampling method exchange places while the advantage of the Gaussian method diminishes.

\begin{figure}[h!]
\centering
\includegraphics[width=0.70\columnwidth]{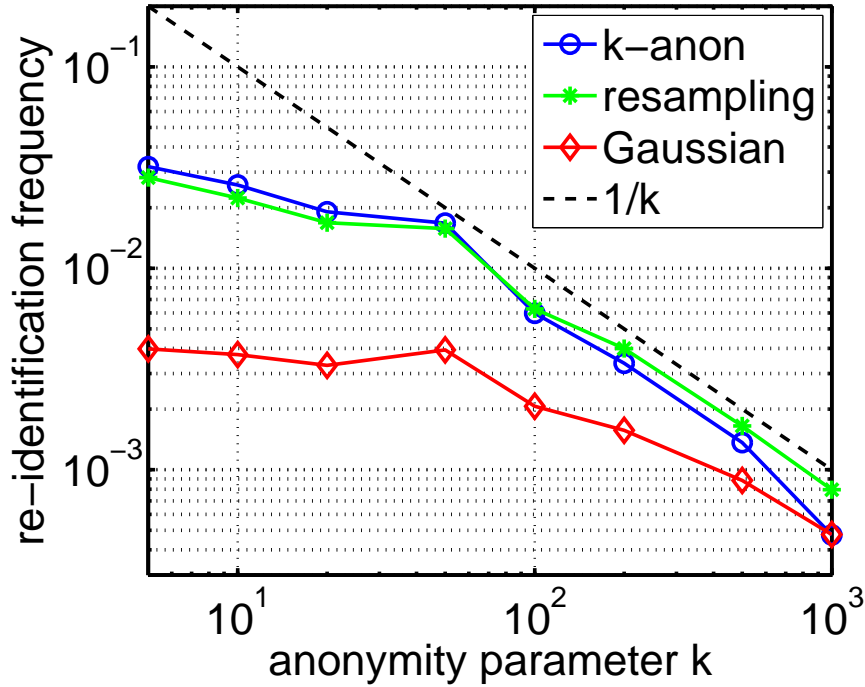}
\caption{Reidentification frequency, averaged over all records and $200$ trials, for different privacy protection methods as a function of anonymity parameter $k$.}
\label{fig:reIDk}
\end{figure}

\subsection{Privacy Preservation and Market Shift}
\label{sec:empirical:privacy_market}
We now discuss results in predicting health care expenditure as a function of the demographic variables age, gender, education level, and income level, as detailed in Section~\ref{sec:empirical:data}. Age, education, and income level are ordinal variables while gender is binary. 
The regression model used in all cases is a sum of univariate functions of each demographic variable. We consider two 
cases as follows.   
\begin{enumerate}
\item The univariate functions 
are not constrained to be linear and can vary arbitrarily with input value.  We refer to this case as ``dummy-coded'' since it can be 
achieved by dummy-coding the discrete variables, i.e., introducing a binary variable for each level while leaving one level out to avoid linear dependence on the intercept term. 
\item The univariate functions are 
constrained to be linear in each variable.  We refer to this second case as ``numeric'' since it 
is achieved by interpreting the levels (e.g.\ 0--7 for age, 0--5 for education) as 
real numeric values. 
\end{enumerate}
The regression model in the second case has many fewer 
degrees of freedom than the more flexible model of the first case.  For this specific MEPS data set, it can be observed that the first model does not offer improvements in prediction accuracy compared to the simpler second model.  We include results for the first model to illustrate the behavior of a commonly used, more complicated model when subjected to privacy preservation and covariate shift.

\begin{figure}[h!]
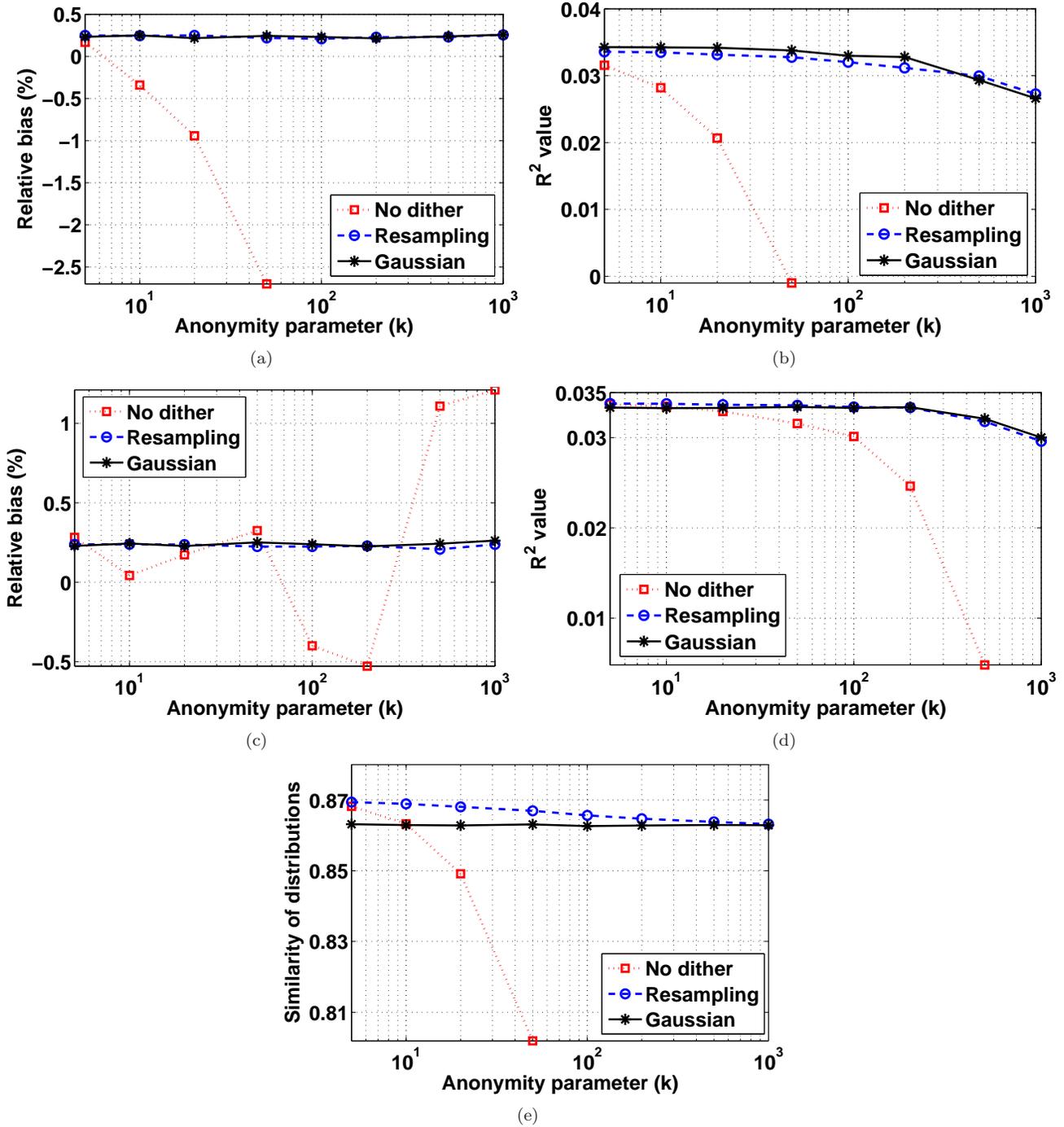

\centerline{
\subfigure[]{\includegraphics[width=0.50\columnwidth]{relBias_noshift.eps}\label{fig:relBias_noshift}}
\subfigure[]{\includegraphics[width=0.52\columnwidth]{rsq_noshift.eps}\label{fig:rsq_noshift}}
}
\centerline{
\subfigure[]{\includegraphics[width=0.50\columnwidth]{relBias_noshift_nodumm.eps}\label{fig:relBias_noshift_nodumm}}
\subfigure[]{\includegraphics[width=0.53\columnwidth]{rsq_noshift_nodumm.eps}\label{fig:rsq_noshift_nodumm}}
}
\centerline{
\subfigure[]{\includegraphics[width=0.5\columnwidth]{sim_noshift.eps}\label{fig:sim_noshift}}
}
\caption{(a) Prediction bias for dummy-coded variables, (b) $R^2$ coefficient for dummy-coded variables, (c) prediction bias for numeric variables, (d) $R^2$ coefficient for numeric variables, and (e) histogram similarity of distributions resulting from $k$-anonymization 
with and without the proposed distribution-preserving procedures. For $k$-anonymity without distribution preservation, relative bias in (a), $R^2$ in (b) and (d), and distribution similarity in (e) all drop drastically with $k$ and reach minimum values of $-53.2$, $-1.193$, $-0.0139$, and $0.1709$ respectively (not shown) for $k = 1000$.  In (c), the lack of distribution preservation makes the bias oscillate but it remains relatively small for all values of $k$.  In contrast, both distribution-preserving procedures result in 
almost constant bias and only 
small drops in $R^2$ and distribution similarity.
}
\label{fig:noshift}
\end{figure}

\subsubsection{Prediction with Privacy Preservation}
\label{sec:empirical:privacy_noshift}

We first present results for the simpler scenario where there is no shift in the demographic distributions between the existing and the new markets, as well as between the corresponding plan enrollment distributions. The plan enrollment data from 
the existing market, i.e.\ the training data, is 
subjected to the same three 
\clearpage
\noindent types of privacy-preserving transformations as 
in Section~\ref{sec:empirical:reID}. 
Figure \ref{fig:noshift} shows the relative bias and coefficient of determination $R^2$ in predicting health care cost in the new market as the anonymity parameter $k$ increases.  For the nonlinear dummy-coded model in Figure~\ref{fig:noshift}(a) and Figure \ref{fig:noshift}(b), 
it is clear that 
performance drops dramatically 
when distribution-preserving approaches are not used. 
Conversely, performance remains either constant or only drops slightly using the proposed distribution-preserving approaches.  For the linear numeric model in Figure~\ref{fig:noshift}(c) and Figure \ref{fig:noshift}(d), the difference is less stark: Without distribution preservation, relative bias oscillates at a relatively low level while $R^2$ decreases to a negative but less extreme value.  These results suggest that the simpler linear model is inherently more robust to privacy transformations but can nevertheless be improved by the proposed distribution-preservation methods. 

The poor performance of $k$-anonymization without distribution preservation can be understood from two perspectives.  First, since in conventional $k$-anonymization, samples of $X$ are replaced with the corresponding cluster centers $\bar{x}_{\ell}$, the number of unique samples for training the regression model decreases to $\left \lfloor {\frac{n}{k}} \right \rfloor$, a substantial reduction when $k$ is large.  It is well-known that the prediction error of a linear regression model tends to decrease at a $O(1/\sqrt{n})$ rate, where $n$ is the training sample size \cite{Shao2003}. 
Second, in Figure~\ref{fig:noshift}(e) we plot the similarity between the training data distribution after privacy transformation and the test data distribution.  We use the histogram intersection similarity \cite{swain1991color} for concreteness, for which a value close to $1$ implies that the predictor is trained on a distribution much like the one encountered in testing.  Figure~\ref{fig:noshift}(e) indicates that this similarity falls dramatically with $k$ under conventional $k$-anonymization. 
Overall, it is 
clear that 
distribution preservation is beneficial 
to achieving useful 
$k$-anonymity even when simple workloads such as linear regression are involved.


\begin{table*}[h!]
  \centering
  \caption{Dummy-coded variable performance results for rating areas of California as new markets for no shift, logistic shift, and non-parametric shift.}
  \small
    \begin{tabular}{r|rrr|rrr}
    \hline
    New Market & \multicolumn{3}{|c}{$R^2$ value} & \multicolumn{3}{|c}{Relative Bias (\%)} \\
    \hline
    \multicolumn{1}{c|}{} & No Shift & Logistic & Non-param. & No Shift & Logistic & Non-param. \\
    \multicolumn{1}{c}{RA 1} & \multicolumn{1}{|c}{0.0300} & \multicolumn{1}{c}{0.0271} & \multicolumn{1}{c}{0.0280} & \multicolumn{1}{|c}{4.38} & \multicolumn{1}{c}{4.64} & \multicolumn{1}{c}{1.86} \\
    
    \multicolumn{1}{c}{RA 2} & \multicolumn{1}{|c}{0.0243} & \multicolumn{1}{c}{0.0228} & \multicolumn{1}{c}{0.0220} & \multicolumn{1}{|c}{3.75} & \multicolumn{1}{c}{2.14} & \multicolumn{1}{c}{-0.49} \\
    
    \multicolumn{1}{c}{RA 3} & \multicolumn{1}{|c}{0.0289} & \multicolumn{1}{c}{0.0272} & \multicolumn{1}{c}{0.0270} & \multicolumn{1}{|c}{2.34} & \multicolumn{1}{c}{1.87} & \multicolumn{1}{c}{-0.24} \\
    
    \multicolumn{1}{c}{RA 4} & \multicolumn{1}{|c}{0.0291} & \multicolumn{1}{c}{0.0278} & \multicolumn{1}{c}{0.0264} & \multicolumn{1}{|c}{-2.54} & \multicolumn{1}{c}{-2.39} & \multicolumn{1}{c}{1.40} \\
    
    \multicolumn{1}{c}{RA 5} & \multicolumn{1}{|c}{0.0221} & \multicolumn{1}{c}{0.0212} & \multicolumn{1}{c}{0.0214} & \multicolumn{1}{|c}{3.63} & \multicolumn{1}{c}{1.76} & \multicolumn{1}{c}{0.53} \\
    
    \multicolumn{1}{c}{RA 6} & \multicolumn{1}{|c}{0.0235} & \multicolumn{1}{c}{0.0232} & \multicolumn{1}{c}{0.0227} & \multicolumn{1}{|c}{2.17} & \multicolumn{1}{c}{1.42} & \multicolumn{1}{c}{0.69} \\
    
    \multicolumn{1}{c}{RA 7} & \multicolumn{1}{|c}{0.0233} & \multicolumn{1}{c}{0.0227} & \multicolumn{1}{c}{0.0219} & \multicolumn{1}{|c}{2.31} & \multicolumn{1}{c}{1.40} & \multicolumn{1}{c}{-0.39} \\
    
    \multicolumn{1}{c}{RA 8} & \multicolumn{1}{|c}{0.0213} & \multicolumn{1}{c}{0.0213} & \multicolumn{1}{c}{0.0196} & \multicolumn{1}{|c}{1.12} & \multicolumn{1}{c}{-0.69} & \multicolumn{1}{c}{-1.29} \\
    
    \multicolumn{1}{c}{RA 9} & \multicolumn{1}{|c}{0.0233} & \multicolumn{1}{c}{0.0219} & \multicolumn{1}{c}{0.0230} & \multicolumn{1}{|c}{5.99} & \multicolumn{1}{c}{5.67} & \multicolumn{1}{c}{0.32} \\
    
    \multicolumn{1}{c}{RA 10} & \multicolumn{1}{|c}{0.0330} & \multicolumn{1}{c}{0.0316} & \multicolumn{1}{c}{0.0323} & \multicolumn{1}{|c}{3.48} & \multicolumn{1}{c}{2.87} & \multicolumn{1}{c}{0.57} \\
    
    \multicolumn{1}{c}{RA 11} & \multicolumn{1}{|c}{0.0314} & \multicolumn{1}{c}{0.0290} & \multicolumn{1}{c}{0.0306} & \multicolumn{1}{|c}{4.90} & \multicolumn{1}{c}{4.87} & \multicolumn{1}{c}{2.17} \\
    
    \multicolumn{1}{c}{RA 12} & \multicolumn{1}{|c}{0.0246} & \multicolumn{1}{c}{0.0233} & \multicolumn{1}{c}{0.0243} & \multicolumn{1}{|c}{4.29} & \multicolumn{1}{c}{3.49} & \multicolumn{1}{c}{0.18} \\
    
    \multicolumn{1}{c}{RA 13} & \multicolumn{1}{|c}{0.0328} & \multicolumn{1}{c}{0.0294} & \multicolumn{1}{c}{0.0253} & \multicolumn{1}{|c}{-1.03} & \multicolumn{1}{c}{-0.53} & \multicolumn{1}{c}{0.10} \\
    
    \multicolumn{1}{c}{RA 14} & \multicolumn{1}{|c}{0.0345} & \multicolumn{1}{c}{0.0331} & \multicolumn{1}{c}{0.0330} & \multicolumn{1}{|c}{2.47} & \multicolumn{1}{c}{1.79} & \multicolumn{1}{c}{0.36} \\
    
    \multicolumn{1}{c}{RA 15-16} & \multicolumn{1}{|c}{0.0295} & \multicolumn{1}{c}{0.0281} & \multicolumn{1}{c}{0.0286} & \multicolumn{1}{|c}{2.83} & \multicolumn{1}{c}{2.62} & \multicolumn{1}{c}{0.81} \\
    
    \multicolumn{1}{c}{RA 17} & \multicolumn{1}{|c}{0.0318} & \multicolumn{1}{c}{0.0300} & \multicolumn{1}{c}{0.0302} & \multicolumn{1}{|c}{0.72} & \multicolumn{1}{c}{0.24} & \multicolumn{1}{c}{-0.77} \\
    
    \multicolumn{1}{c}{RA 18} & \multicolumn{1}{|c}{0.0250} & \multicolumn{1}{c}{0.0246} & \multicolumn{1}{c}{0.0252} & \multicolumn{1}{|c}{3.75} & \multicolumn{1}{c}{2.85} & \multicolumn{1}{c}{0.84} \\
    
    \multicolumn{1}{c}{RA 19} & \multicolumn{1}{|c}{0.0268} & \multicolumn{1}{c}{0.0258} & \multicolumn{1}{c}{0.0268} & \multicolumn{1}{|c}{3.36} & \multicolumn{1}{c}{3.07} & \multicolumn{1}{c}{1.23} \\    
    \hline
    \end{tabular}%
  \label{tab:empres_nopriv}%
\end{table*}%

\subsubsection{Prediction 
with Market Shift}
\label{sec:empirical:est}
Next we discuss cost prediction in the absence of privacy-preserving data transformations but with 
a shift in demographic distribution between 
the existing and the new markets.  Two covariate shift methods are compared, corresponding respectively to non-parametric and logistic regression methods of estimating the importance weights $w(x)$. 
A baseline method that does not account for covariate shift is also compared.  

Table \ref{tab:empres_nopriv} and Table \ref{tab:empres_nopriv_nodumm} summarize performance in predicting the cost in 
rating areas of California as new markets using nonlinear dummy-coded and linear numeric models respectively. As discussed in Section~\ref{sec:mra}, 
for aggregate prediction the bias is often the more important performance metric.  
Table \ref{tab:empres_nopriv} shows that the baseline approach using the dummy-coded model has a noticeable bias, and the covariate shift approaches reduce the bias for most new markets by shifting the distribution of existing plan members to look more like prospective enrollees in the new market. This reduction is particularly significant with the non-parametric shift method. With the linear numeric model in Table \ref{tab:empres_nopriv_nodumm}, the relative bias is lower for the baseline, but still, the non-parametric shift method tends to offer a reduction over the baseline approach in most rating areas. In fact, the biases of the covariate shift methods under the dummy-coded model and the numeric model are similar in most markets. Performing covariate shift is more useful with the nonlinear dummy-coded model, which is very common in several applications, but also provides some advantage for the numeric model.

\begin{table}[h!]
  \centering
  \caption{Numeric variable performance results for rating areas of California as new markets for no shift, logistic shift, and non-parametric shift.}
    \small
    \begin{tabular}{c|ccc|ccc}
    \hline
    New Market & \multicolumn{3}{|c}{$R^2$ value} & \multicolumn{3}{|c}{Relative Bias (\%)} \\
    \hline
   & No Shift & Logistic & Non-param. & No Shift & Logistic & Non-param. \\
    RA 1  & 0.0312 & 0.0286 & 0.0291 & 0.47  & 4.49  & 1.81 \\
    RA 2  & 0.0260 & 0.0247 & 0.0242 & 1.15  & 1.76  & -0.48 \\
    RA 3  & 0.0288 & 0.0275 & 0.0273 & 0.06  & 1.56  & -0.30 \\
    RA 4  & 0.0296 & 0.0289 & 0.0279 & -2.39 & -1.94 & 1.27 \\
    RA 5  & 0.0247 & 0.0238 & 0.0238 & 0.97  & 1.47  & 0.48 \\
    RA 6  & 0.0252 & 0.0249 & 0.0248 & 0.22  & 1.43  & 0.63 \\
    RA 7  & 0.0260 & 0.0257 & 0.0256 & 0.90  & 1.65  & -0.45 \\
    RA 8  & 0.0245 & 0.0239 & 0.0232 & -1.20 & -0.83 & -1.34 \\
    RA 9  & 0.0236 & 0.0225 & 0.0235 & 4.41  & 5.94  & 0.30 \\
    RA 10 & 0.0316 & 0.0304 & 0.0311 & 3.01  & 3.57  & 0.50 \\
    RA 11 & 0.0308 & 0.0290 & 0.0305 & 3.16  & 5.67  & 2.04 \\
    RA 12 & 0.0265 & 0.0253 & 0.0259 & 1.15  & 3.41  & 0.13 \\
    RA 13 & 0.0326 & 0.0307 & 0.0291 & -0.85 & 0.13  & -0.09 \\
    RA 14 & 0.0337 & 0.0328 & 0.0328 & 1.87  & 2.52  & 0.20 \\
    RA 15-16 & 0.0292 & 0.0287 & 0.0290 & 2.96  & 3.05  & 0.73 \\
    RA 17 & 0.0312 & 0.0301 & 0.0300 & 0.86  & 0.86  & -0.85 \\
    RA 18 & 0.0258 & 0.0252 & 0.0258 & 2.63  & 3.00  & 0.79 \\
    RA 19 & 0.0283 & 0.0272 & 0.0279 & 1.45  & 3.18  & 1.17 \\
    \hline
    \end{tabular}%
  \label{tab:empres_nopriv_nodumm}%
\end{table}

\subsubsection{Prediction 
with Privacy Preservation and Market Shift}
\label{sec:empirical:privacy}
Lastly we present results for the scenario in which 
both the insurer's existing plan data is subjected to the three privacy transformations used 
in Sections~\ref{sec:empirical:reID} and \ref{sec:empirical:privacy_noshift}, 
and there is a demographic shift between the existing and new markets.  We focus in this subsection on California rating area $18$ as the new market. 


Figure~\ref{fig:Gr_none} shows results for $k$-anonymization without distribution preservation. As $k$ increases, the original samples from the plan data are represented more and more coarsely by their cluster centers. As a consequence, covariate shift methods fail and prediction accuracy suffers markedly.  In fact, both bias and $R^2$ are unacceptably bad for the dummy-coded data model.  For the numeric data model, the bias does not actually deteriorate by much under the baseline approach, again pointing toward the robustness of the simple linear model.  However, $R^2$ still decreases to negative values.  The covariate shift methods continue to yield unacceptable bias but manage to keep $R^2$ positive.

Figure~\ref{fig:Gr_Re} shows the prediction performance of the proposed privacy-preserving procedure using resampling with replacement combined with different covariate shift methods.  In great contrast to Figure~\ref{fig:Gr_none}, the relative bias stays low for all values of $k$ while $R^2$ decreases only slightly with increasing $k$.  The main difference between the dummy-coded and numeric models is that the bias remains almost constant in the latter case when the baseline method is used.

Figure~\ref{fig:Gr_Ga} depicts performance under privacy preservation with Gaussian dither.  In this case, the bias remains controlled but increases with $k$ to a level higher than for the resampling method in Figure~\ref{fig:Gr_Re}.  A possible explanation for the difference between the two distribution-preserving methods can be found in Figures~\ref{fig:reID5}--\ref{fig:reIDk} and Section~\ref{sec:empirical:reID}.  There it is seen that the Gaussian dither method has much lower reidentification risk, likely as a result of mapping original quasi-identifiers $x$ to distant points $\hat{x}$.  This mapping however may further distort the relationship between $X$ and $Y$ relative to the original, causing the prediction bias to increase.  
In the dummy-coded case, the non-parametric covariate shift method 
succeeds at reducing bias for all $k$. 
In the numeric case, covariate shift methods reduce bias only for $k < 50$.  
On the other hand, $R^2$ is slightly lower for the covariate shift methods because the reweighting of training samples to reduce bias also introduces some additional variability. 

\begin{figure}[H]
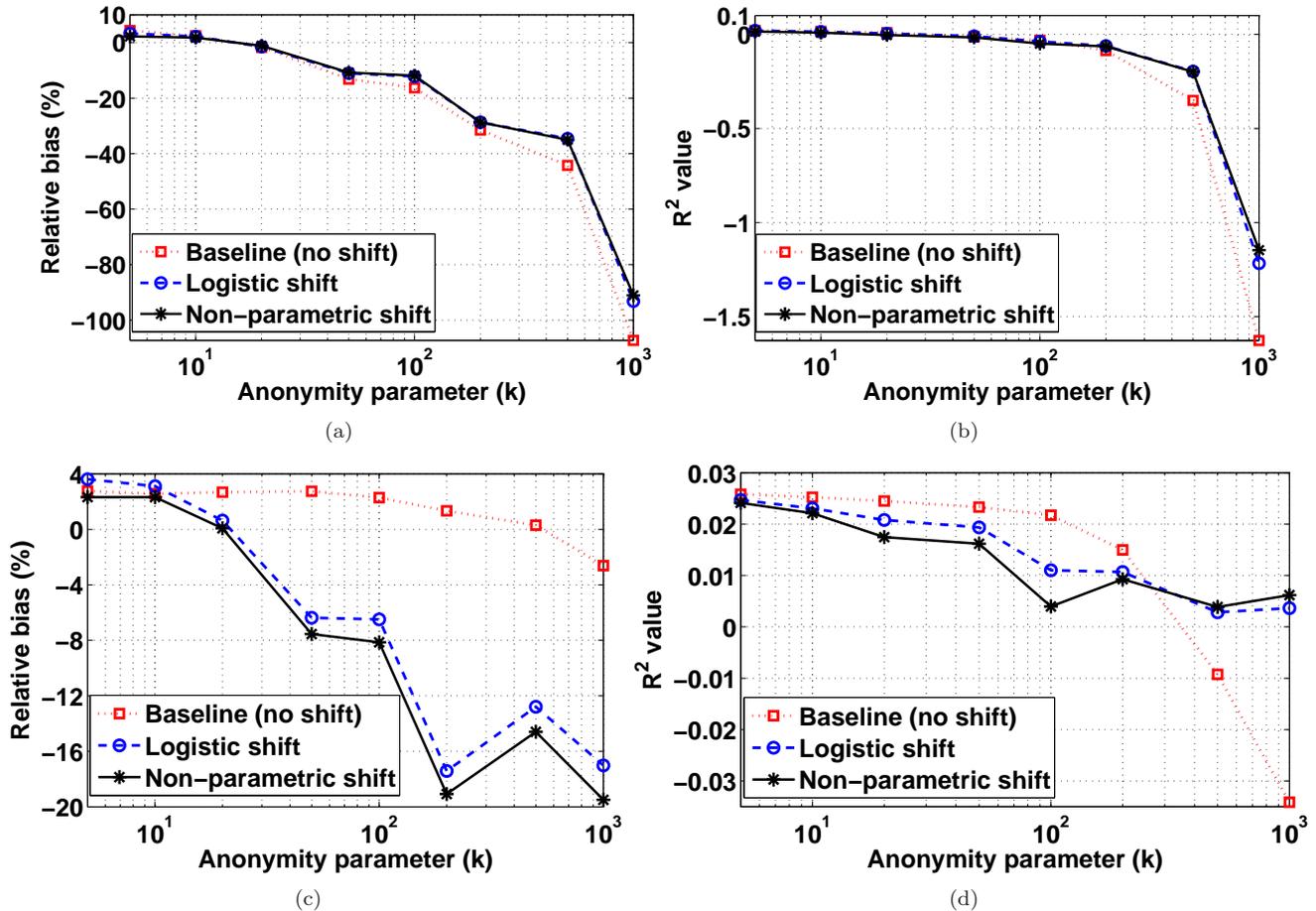

\centerline{
\subfigure[]{\includegraphics[width=0.50\columnwidth]{relBias_Gr_none.eps}\label{fig:relBias_Gr_none}}
\subfigure[]{\includegraphics[width=0.50\columnwidth]{rsq_Gr_none.eps}\label{fig:rsq_Gr_none}}
}
\centerline{
\subfigure[]{\includegraphics[width=0.50\columnwidth]{relBias_Gr_none_nodumm.eps}\label{fig:relBias_Gr_none_nodumm}}
\subfigure[]{\includegraphics[width=0.55\columnwidth]{rsq_Gr_none_nodumm.eps}\label{fig:rsq_Gr_none_nodumm}}
}
\caption{Performance results with no distribution preservation: (a) dummy-coded prediction bias, (b) dummy-coded $R^2$ coefficient, (c) numeric prediction bias, and (d) numeric $R^2$ coefficient. For dummy-coded data, the prediction error increases unacceptably as $k$ increases. For numeric data, the bias drops only moderately for the baseline case whereas the $R^2$ coefficient becomes unacceptable for large values of $k$. With the covariate shift methods, the bias drops rapidly, whereas the $R^2$ coefficient decreases slowly, as $k$ increases.}
\label{fig:Gr_none}
\end{figure}

\begin{figure}[H]
\centerline{
\subfigure[]{\includegraphics[width=0.50\columnwidth]{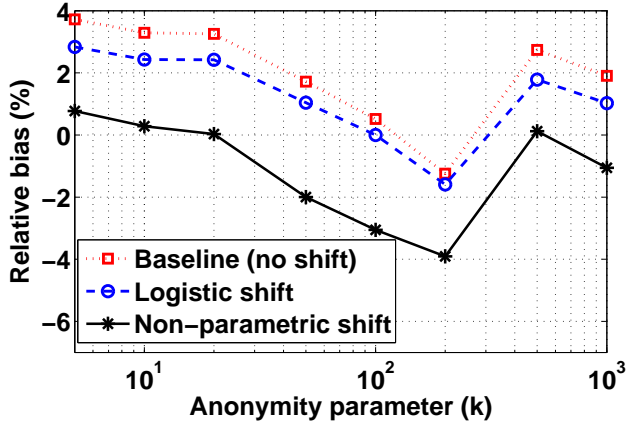}\label{fig:relBias_Gr_Re}}
\subfigure[]{\includegraphics[width=0.525\columnwidth]{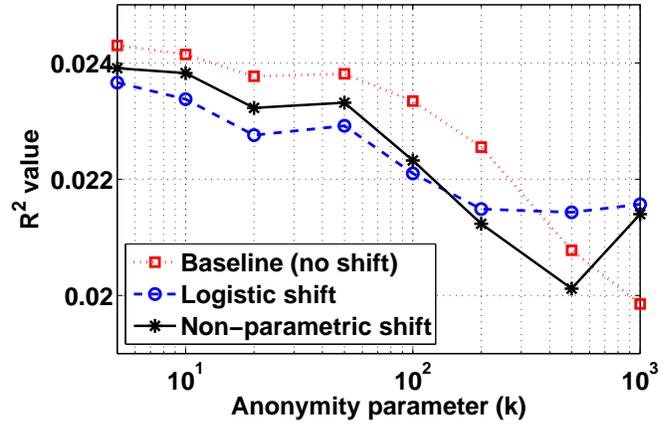}\label{fig:rsq_Gr_Re}}
}
\centerline{
\subfigure[]{\includegraphics[width=0.50\columnwidth]{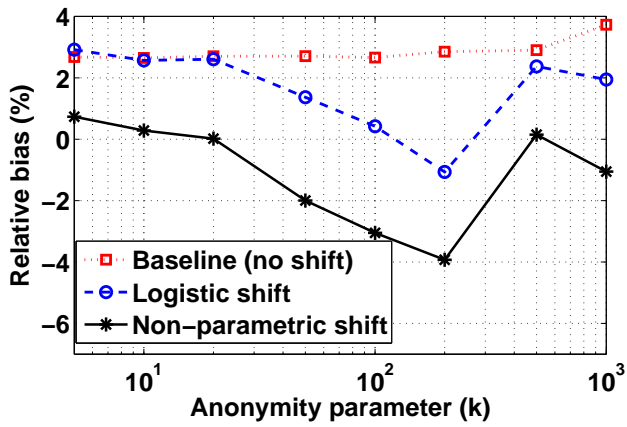}\label{fig:relBias_Gr_Re_nodumm}}
\subfigure[]{\includegraphics[width=0.525\columnwidth]{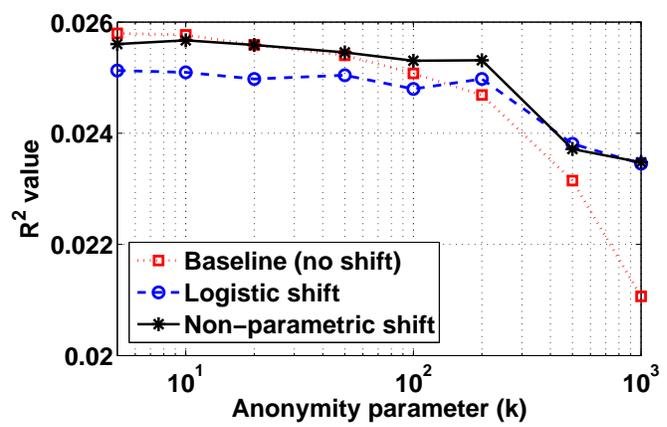}\label{fig:rsq_Gr_Re_nodumm}}
}
\caption{Performance results for intra-cluster dither: (a) dummy-coded prediction bias, (b) dummy-coded $R^2$ coefficient, (c) numeric prediction bias, and (d) numeric $R^2$ coefficient.  For dummy-coded data, as $k$ increases, relative bias stays low whereas the $R^2$ coefficient has a generally decreasing trend. For numeric data, as $k$ increases, the baseline results in an almost-constant bias and a steady decrease in the $R^2$ coefficient. For the two shift methods, the relative bias with numeric data is similar to the case with dummy-coded data.}
\label{fig:Gr_Re}
\end{figure}

\begin{figure}[H]
\centerline{
\subfigure[]{\includegraphics[width=0.50\columnwidth]{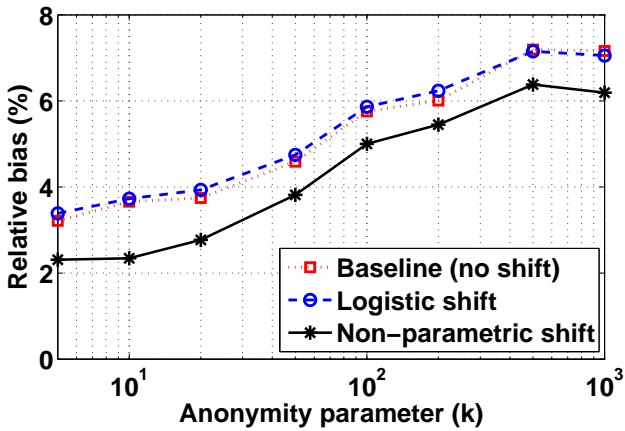}\label{fig:relBias_Gr_Ga}}
\subfigure[]{\includegraphics[width=0.55\columnwidth]{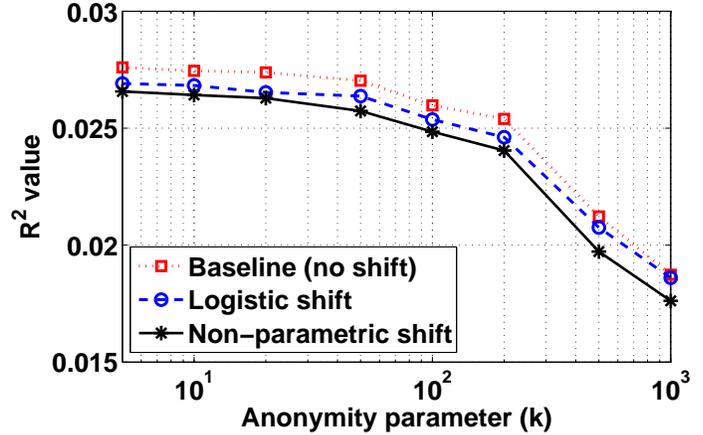}\label{fig:rsq_Gr_Ga}}
}
\centerline{
\subfigure[]{\includegraphics[width=0.50\columnwidth]{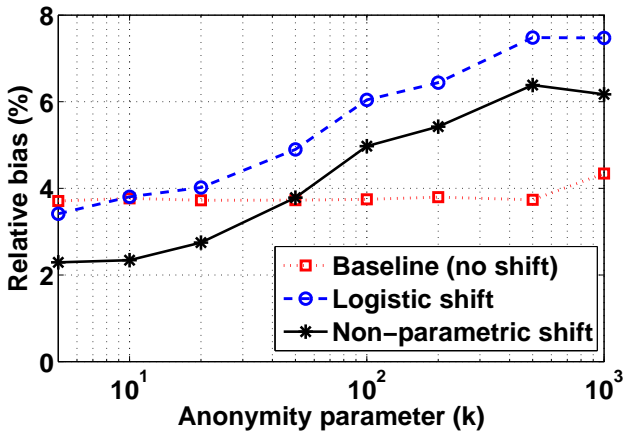}\label{fig:relBias_Gr_Ga_nodumm}}
\subfigure[]{\includegraphics[width=0.55\columnwidth]{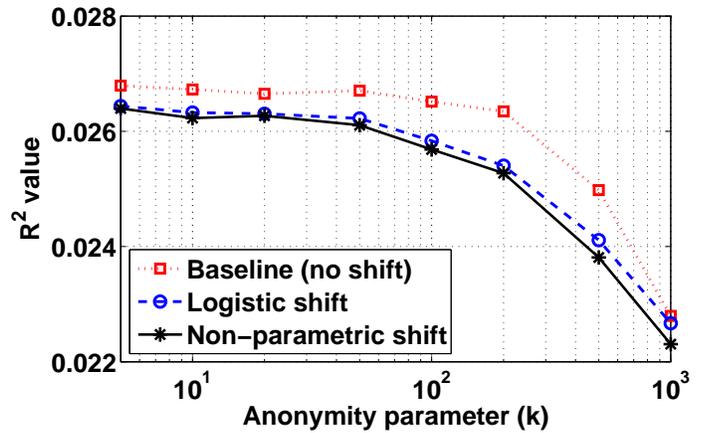}\label{fig:rsq_Gr_Ga_nodumm}}
}
\caption{Performance results for Gaussian dither: (a) dummy-coded prediction bias, (b) dummy-coded $R^2$ coefficient, (c) numeric prediction bias, and (d) numeric $R^2$ coefficient. For dummy-coded data, as $k$ increases, distribution preservation moderates the increase in bias, while the covariate shift methods reduce bias. For numeric data, as $k$ increases, the baseline results in an almost-constant bias and a steady decrease in the $R^2$ coefficient. For the two shift methods, the relative bias with numeric data is similar to the case with dummy-coded data.}
\label{fig:Gr_Ga}
\end{figure}

\begin{figure}[H]
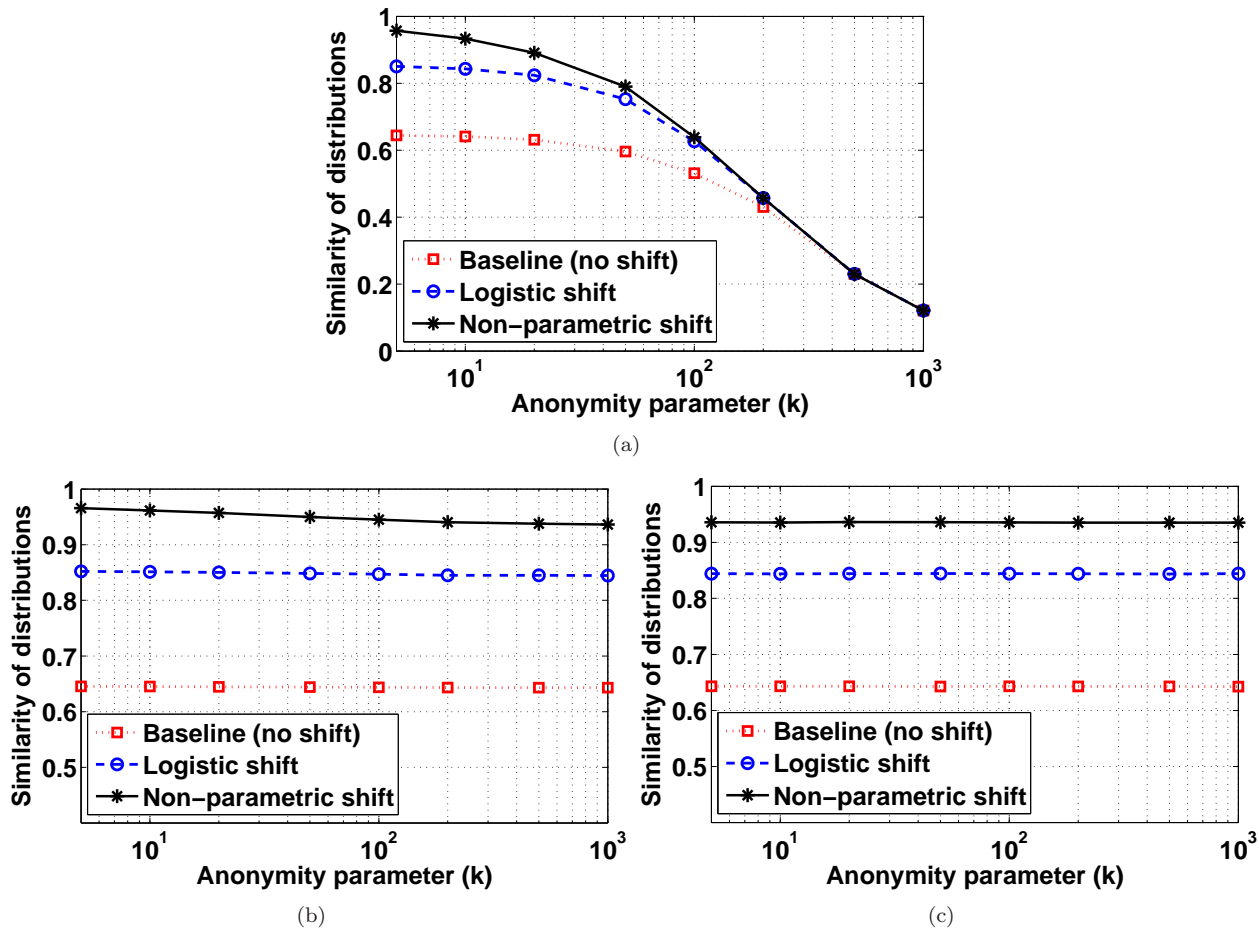

\centerline{
\subfigure[]{\includegraphics[width=0.50\columnwidth]{sim_Gr_none.eps}\label{fig:sim_Gr_none}}
}
\centerline{
\subfigure[]{\includegraphics[width=0.50\columnwidth]{sim_Gr_Re.eps}\label{fig:sim_Gr_Re}}
\subfigure[]{\includegraphics[width=0.50\columnwidth]{sim_Gr_Ga.eps}\label{fig:sim_Gr_Ga}}
}
\caption{Area under histogram intersection between the estimated and actual new enrollment distributions for different covariate shift methods and (a) no distribution preservation, (b) intra-cluster dither, or (c) Gaussian dither.  Both proposed distribution-preserving transformations maintain a mostly constant similarity as the anonymity $k$ increases, whereas the similarity deteriorates rapidly without distribution preservation.}
\label{fig:sim}
\vskip -3mm
\end{figure}


Overall, Figures~\ref{fig:Gr_none}--\ref{fig:Gr_Ga} demonstrate the advantage of preserving quasi-identifier distributions when $k$-anonymization is required in a covariate shift setting. 
Some insight into the above results 
can be seen in Figure~\ref{fig:sim}, which depicts the same histogram intersection similarity used in Figure~\ref{fig:noshift}(e), this time between the new enrollment distribution estimated from privacy-transformed training data using the covariate shift methods, and the actual new enrollment.  Under conventional $k$-anonymization in Figure~\ref{fig:sim_Gr_none}, the similarity decreases rapidly with $k$, but using distribution-preserving privacy transformations in Figure~\ref{fig:sim_Gr_Re} and Figure~\ref{fig:sim_Gr_Ga}, the similarity can be kept mostly constant as the anonymity $k$ increases and can be further enhanced by the covariate shift methods.

\section{Conclusion}
\label{sec:conclusion}

In this paper, our main contribution has been to develop a new privacy preservation operation that has the key property of preserving the data distribution of the quasi-identifiers.  The specific privacy criterion we  address is $k$-anonymity, which is a common mathematical interpretation of legal privacy standards.  We have shown how such distribution preservation is a clear need in supervised machine learning settings such as covariate shift and transfer learning that also require anonymized microdata.  Other data analysis workloads that require both anonymization and lack of distortion in the data distribution will also benefit from the proposed methodology.  Workloads requiring distribution preservation have not yet been addressed in the data privacy literature.

Our proposed technique combines $k$-member clustering with components from distribution-preserving quantization, namely dithering and Rosenblatt's transformation.  Distribution-preserving quantization was originally developed in the audio signal processing literature and has never been applied to privacy applications before.  We have analyzed two approaches for dithering, intra-cluster and Gaussian, and shown how both achieve distribution preservation when followed by Rosenblatt's transformation.  Existing distribution-preserving quantization takes the number of clusters as input rather than constraining the cluster size; thus, another contribution of our work is the extension of distribution-preserving quantization to a constrained setting.

Moreover, we have contributed to a solution to the real-world health care market risk assessment problem, a common problem encountered by health insurance companies that was 
especially pertinent after passage of the Affordable Care Act.  Insurance companies had not developed machine learning approaches for this problem, only relying on crude estimates mainly driven by intuition and coarse 
aggregate-level data.  We show successful empirical results on MEPS data with realistic simulations for new markets and enrollment probabilities.  (Actual health cost data from insurance companies that we have worked with in the recent past is confidential.)

In particular, we see that the overall method with intra-cluster dithering keeps the reidentification risk approximately the same as $k$-member clustering, which is sufficient for the requirement of $k$-anonymity.  The Gaussian dither has even lower reidentification risk.  Additionally, the non-parametric version of the covariate shift is successful in significantly reducing the relative bias of the regression that would occur if the covariate shift were not done.  Examining the results of the full solution, we see that without our new privacy preservation methods, cost prediction can 
fail for $k$-anonymity greater than ten or twenty, but prediction results remain satisfactory 
when using our proposed approach.

One health care application-specific direction for future work addresses the following issue.  In the market risk assessment problem, it is possible that the conditional distribution $p_{Y\given X}$ is not the same in the training and test populations, i.e., current and new markets, unlike in the standard covariate shift problem. For example, the overall cost of living in the new market may differ from that in the existing market and this may affect health care costs as well.  However, it is unlikely for there to be sufficient data to learn the full conditional distribution $p_{Y\given X,M}$ (otherwise market shift would not be much of a problem).  One approximation is to assume a simple scaling where an underlying conditional distribution $p_{Y\given X}$ is scaled by a cost-of-living factor $a(M)$ depending on $M$ (implying that the conditional mean for example is $\E[Y\given X,M] = a(M) \E[Y\given X]$). 

A general direction for future work is theoretical analysis of the proposed privacy-preservation method.  Dithered quantization has much supporting theory that we would like to further explore in the context of privacy, where it has never been applied before.  We would also like to explore stronger notions of privacy such as $l$-diversity \cite{machanavajjhala2007diversity} and $t$-closeness \cite{li2007t} as well as non-discrimination and equitability (an issue that is mathematically similar to privacy \cite{Ruggieri2014}), all of which have not been examined for distribution-preserving workloads.  Exploration of different solution methods for the $k$-member clustering optimization problem is of interest as well.


%

\appendix

\section{Proof of Theorem \ref{thm:resampling}}
\label{app:thm}

It suffices to prove that if $\tilde{x} \in \mathcal{C}(i_1^\ast,\dots,i_d^\ast)$, then Rosenblatt's transformation yields $\hat{x} = v(i_1^\ast,\dots,i_d^\ast)$ with probability $1$ for any $(i_1^\ast,\dots,i_d^\ast)$ so that the cell probabilities in \eqref{eq:probCell} translate directly into the desired resampling probabilities.  This can be done by induction over the dimensions $j = 1,\dots,d$. 

For $j = 1$, we consider the CDF $F_{\tilde{X}_1}(\tilde{x}_1)$ used in the first step of transformation \eqref{eq:RosenblattForward}.  For any $i_1^\ast = 1, \dots, n_1$, the law of total probability gives 
\begin{displaymath}
\Pr\bigl(\tilde{X}_1 \in \mathcal{I}_1(i_1^\ast)\bigr) = \sum_{\ell=1}^c p_L(\ell) \sum_{i_1,\dots,i_d} \Pr\bigl(\tilde{X} \in \mathcal{C}(i_1,\dots,i_d) \given \ell\bigr)\Pr\bigl(\tilde{X}_1 \in \mathcal{I}_1(i_1^\ast) \given \tilde{X} \in \mathcal{C}(i_1,\dots,i_d) \bigr).
\end{displaymath}
Substituting \eqref{eq:probCell} and using the fact that 
\[
\Pr\bigl(\tilde{X}_1 \in \mathcal{I}_1(i_1^\ast) \given \tilde{X} \in \mathcal{C}(i_1,\dots,i_d) \bigr) = 
\begin{cases}
1, & i_1 = i_1^\ast,\\
0, & i_1 \neq i_1^\ast,
\end{cases}
\]
we obtain
\begin{align}
\Pr\bigl(\tilde{X}_1 \in \mathcal{I}_1(i_1^\ast)\bigr) &= \sum_{\ell=1}^c \frac{n_\ell}{n} \sum_{i_2,\dots,i_d} \frac{n_{\ell}(i_1^\ast,i_2,\dots,i_d)}{n_\ell}\nonumber\\
&= \sum_{i_2,\dots,i_d} \frac{n(i_1^\ast,i_2,\dots,i_d)}{n}\nonumber\\
&= p_{X_1}\bigl(v_1(i_1^\ast)\bigr)\label{eq:thm:resampling1}
\end{align}
by the definition of empirical distribution.  It follows that for $\tilde{x}_1 \in \mathcal{I}_1(i_1^\ast)$,
\begin{align}
\sum_{i_1=1}^{i_1^\ast-1} p_{X_1}\bigl(v_1(i_1)\bigr) \leq &F_{\tilde{X}_1}(\tilde{x}_1) \leq \sum_{i_1=1}^{i_1^\ast} p_{X_1}\bigl(v_1(i_1)\bigr),\nonumber\\
F_{X_1}\bigl(v_1(i_1^\ast - 1)\bigr) \leq &F_{\tilde{X}_1}(\tilde{x}_1) \leq F_{X_1}\bigl(v_1(i_1^\ast)\bigr).\label{eq:thm:resampling2}
\end{align}
Combining \eqref{eq:thm:resampling2} with \eqref{eq:RosenblattForward} and \eqref{eq:RosenblattInverse2} for $j=1$, we conclude that $\hat{x}_1 = v_1(i_1^\ast)$ with probability $1$.

For $j > 1$, we consider the conditional distribution of $\tilde{X}_j$ given $\tilde{X}^{j-1} = \tilde{x}^{j-1}$.  Let $f$ denote a generic probability density function (PDF).  Starting from 
\[
\Pr\bigl(\tilde{X}_j \in \mathcal{I}_j(i_j^\ast) \given \tilde{X}^{j-1} = \tilde{x}^{j-1}\bigr) = \frac{\displaystyle\int_{\mathcal{I}_j(i_j^\ast)} f_{\tilde{X}^{j}}(\tilde{x}^{j}) d\tilde{x}_j}{f_{\tilde{X}^{j-1}}(\tilde{x}^{j-1})},
\]
we apply a similar total probability decomposition as above to both numerator and denominator to obtain 
\begin{align}
&\Pr\bigl(\tilde{X}_j \in \mathcal{I}_j(i_j^\ast) \given \tilde{X}^{j-1} = \tilde{x}^{j-1}\bigr)\nonumber\\ 
&= \frac{\displaystyle\sum_{i_1,\dots,i_d} \frac{n(i_1,\dots,i_d)}{n} \int_{\mathcal{I}_j(i_j^\ast)} f_{\tilde{X}^{j}\given C}\bigl(\tilde{x}^{j} \given C(i_1,\dots,i_d) \bigr) d\tilde{x}_j}{\displaystyle\sum_{i_1,\dots,i_d} \frac{n(i_1,\dots,i_d)}{n} f_{\tilde{X}^{j-1}\given C}\bigl(\tilde{x}^{j-1} \given C(i_1,\dots,i_d) \bigr)}.\label{eq:thm:resampling3}
\end{align}
Since $\tilde{x}_{j'} \in \mathcal{I}_{j'}(i_{j'}^\ast)$ for $j' < j$, the PDFs in the numerator and denominator of \eqref{eq:thm:resampling3} are zero unless $i_{j'} = i_{j'}^{\ast}$, $j' < j$, while in the numerator, the integral is also zero unless $i_j = i_j^\ast$, in which case $\tilde{x}_j$ is marginalized out.  Hence the numerator becomes 
\begin{displaymath}
\sum_{i_{j+1},\dots,i_d} \frac{n(i_1^\ast,\dots,i_j^\ast,i_{j+1},\dots,i_d)}{n} f_{\tilde{X}^{j-1}\given C}\bigl(\tilde{x}^{j-1} \given C(i_1^\ast,\dots,i_j^\ast,i_{j+1},\dots,i_d) \bigr),
\end{displaymath}
while the denominator is similar except for an additional sum over $i_j$.  We now exploit the fact that $\tilde{X}^{j-1}$ is uniformly distributed conditioned on a cell, i.e.\ with $\abs{\mathcal{I}}$ denoting the width of interval $\mathcal{I}$, 
\begin{displaymath}
f_{\tilde{X}^{j-1}\given C}\bigl(\tilde{x}^{j-1} \given C(i_1^\ast,\dots,i_{j-1}^\ast,i_{j},\dots,i_d) \bigr) = \left(\prod_{j'=1}^{j-1} \abs{\mathcal{I}_{j'}(i_{j'}^\ast)} \right)^{-1}, \quad \forall \; i_j, \dots, i_d.
\end{displaymath}
This implies that all remaining PDFs in the numerator and denominator have the same value, reducing \eqref{eq:thm:resampling3} to 
\begin{align}
\Pr\bigl(\tilde{X}_j &\in \mathcal{I}_j(i_j^\ast) \given \tilde{X}^{j-1} = \tilde{x}^{j-1}\bigr)\nonumber\\
&= \frac{\displaystyle\sum_{i_{j+1},\dots,i_d} \frac{n(i_1^\ast,\dots,i_j^\ast,i_{j+1},\dots,i_d)}{n}}{\displaystyle\sum_{i_{j},\dots,i_d} \frac{n(i_1^\ast,\dots,i_{j-1}^\ast,i_{j},\dots,i_d)}{n}}\nonumber\\
&= \frac{p_{X^j}\bigl( v^j(i_1^\ast,\dots,i_j^\ast) \bigr)}{p_{X^{j-1}}\bigl( v^{j-1}(i_1^\ast,\dots,i_{j-1}^\ast) \bigr)}\nonumber\\
&= p_{X_j\given X^{j-1}}\bigl( v_j(i_j^\ast) \given v^{j-1}(i_1^\ast,\dots,i_{j-1}^\ast) \bigr).\label{eq:thm:resampling4}
\end{align}
Similar to \eqref{eq:thm:resampling1} and \eqref{eq:thm:resampling2}, \eqref{eq:thm:resampling4} implies that for $\tilde{x}_{j'} \in \mathcal{I}_{j'}(i_{j'}^\ast)$, $j' = 1,\dots,j$,
\begin{align*}
F_{X_j\given X^{j-1}}&\bigl( v_j(i_j^\ast-1) \given v^{j-1}(i_1^\ast,\dots,i_{j-1}^\ast) \bigr)\\
&\leq F_{\tilde{X}_j\given\tilde{X}^{j-1}}\bigl(\tilde{x}_j \given \tilde{x}^{j-1}\bigr)\\
&\leq F_{X_j\given X^{j-1}}\bigl( v_j(i_j^\ast) \given v^{j-1}(i_1^\ast,\dots,i_{j-1}^\ast) \bigr).
\end{align*}
We combine these last inequalities with \eqref{eq:RosenblattForward} and \eqref{eq:RosenblattInverse2}, where $\hat{x}^{j-1} = v^{j-1}(i_1^\ast,\dots,i_{j-1}^\ast)$ by the inductive assumption.  This establishes that $\hat{x}_j = v_j(i_j^\ast)$ with probability $1$.


  \section*{Acknowledgment}

The authors thank A.~Gkoulalas-Divanis, Y.~Hwang, V.~S.~Iyengar,  A.~Mojsilovi\'c, and G.~Yuen-Reed for conversations and support.




\bibliographystyle{IEEEtran}
\bibliography{jkanonymity}
\end{document}